\setlist[enumerate]{wide=\parindent}
\newtheorem{theorem}{\textbf{Theorem}}
\newtheorem{remark}{\textbf{Remark}}
\newtheorem{definition}{\textbf{Definition}}
\newtheorem{lemma}{\textbf{Lemma}}
\newtheorem{corollary}{\textbf{Corollary}}
\newtheorem{assumption}{\textbf{Assumption}}
\newtheorem{proposition}{\textbf{Proposition}}
\newtheorem{problem}{\textbf{Problem}}
\begin{document}

\title{Privacy-preserving Distributed Machine Learning via Local Randomization and ADMM Perturbation}

\author{Xin Wang,~\IEEEmembership{Student~Member,~IEEE,}
        Hideaki Ishii,~\IEEEmembership{Senior~Member,~IEEE,}
        Linkang Du, ~\IEEEmembership{Student~Member,~IEEE,}
        Peng Cheng,~\IEEEmembership{Member,~IEEE,}
        and~Jiming Chen,~\IEEEmembership{Fellow,~IEEE}
\thanks{X. Wang, L. Du, P. Cheng and J. Chen are with the State Key Lab. of Industrial Control Technology, Zhejiang University, Hangzhou, 310027, P. R. China. X. Wang is also with the Dept. of Computer Science, Tokyo Institute of Technology, Yokohama, 226-8502, Japan. Emails: {\small xinw.zju@gmail.com; linkangd@gmail.com; pcheng@iipc.zju.edu.cn; cjm@zju.edu.cn}}
\thanks{H. Ishii is with the Dept. of Computer Science, Tokyo Institute of Technology, Yokohama, 226-8502, Japan. Email: {\small ishii@c.titech.ac.jp}}%
}

\maketitle

\begin{abstract}
With the proliferation of training data, distributed machine learning (DML) is becoming more competent for large-scale learning tasks. However, privacy concerns have to be given priority in DML, since training data may contain sensitive information of users. In this paper, we propose a privacy-preserving ADMM-based DML framework with two novel features: First, we remove the assumption commonly made in the literature that the users trust the server collecting their data. Second, the framework provides heterogeneous privacy for users depending on data's sensitive levels and servers' trust degrees. The challenging issue is to keep the accumulation of privacy losses over ADMM iterations minimal. In the proposed framework, a local randomization approach, which is differentially private, is adopted to provide users with self-controlled privacy guarantee for the most sensitive information. Further, the ADMM algorithm is perturbed through a combined noise-adding method, which simultaneously preserves privacy for users' less sensitive information and strengthens the privacy protection of the most sensitive information. We provide detailed analyses on the performance of the trained model according to its generalization error. Finally, we conduct extensive experiments using real-world datasets to validate the theoretical results and evaluate the classification performance of the proposed framework.
\end{abstract}

\begin{IEEEkeywords}
Distributed machine learning, privacy preservation, ADMM, generalization error.
\end{IEEEkeywords}

\IEEEpeerreviewmaketitle

\section{Introduction}\label{introduction}
\IEEEPARstart{I}{n} the era of big data, distributed machine learning (DML) is increasingly applied in various areas of our daily lives, especially with proliferation of training data. Typical applications of DML include machine-aided prescription \cite{fredrikson2014privacy}, natural language processing \cite{le2014distributed}, recommender systems \cite{wang2015collaborative}, to name a few. Compared with the traditional single-machine model, DML is more competent for large-scale learning tasks due to its scalability and robustness to faults. The alternating direction method of multipliers (ADMM), as a commonly-used parallel computing approach in optimization community, is a simple but efficient algorithm for multiple servers to collaboratively solve learning problems \cite{boyd2011distributed}. Our DML framework also use ADMM as the underlying algorithm.

However, privacy is a significant issue that has to be considered in DML. In many machine learning tasks, users' data for training the prediction model contains sensitive information, such as genotypes, salaries, and political orientations. For example, if we adopt DML methods to predict HIV-1 infection \cite{qi2010semi}, the data used for protein-protein interactions identification mainly includes patients' information about their proteins, labels indicating whether they are HIV-1 infected or not, and other kinds of health data. Such information, especially the labels, is extremely sensitive for the patients. Moreover, there exist potential risks of privacy disclosure. On one hand, when users report their data to servers, illegal parties can eavesdrop the data transmission processes or penetrate the servers to steal reported data. On the other, the communicated information between servers, which is required to train a common prediction model, can also disclose users' private data. If these disclosure risks are not properly controlled, users would refuse to contribute their data to servers even though DML may bring convenience for them.

Various privacy-preserving solutions have been proposed in the literature. Differential privacy (DP) \cite{dwork2008differential} is one of the standard non-cryptographical approaches and has been applied in distributed computing scenarios \cite{wang2019privacy, nozari2017differentially, dpc2012, wang2018differentially}. Other schemes which are not DP-preserving can be found in \cite{mo2017privacy, manitara2013privacy, he2019consensus}. In addition, privacy-aware machine learning problems \cite{chaudhuri2011differentially, zhang2017dynamic, ding2019optimal, gade2018privacy} have attracted a lot of attentions, and many researchers have proposed ADMM-based solutions \cite{lee2015maximum, zhang2018improving, zhang2019admm}. However, there exists an underlying assumption in most privacy-aware schemes that the data contributors trust the servers collecting their data. This trustworthy assumption may lead to privacy disclosure in many cases. For instance, when the server is penetrated by an adversary, the information obtained by the adversary may be the users' original private data.

Moreover, most existing schemes provide the same privacy guarantee for the entire data sample of a user though different data pieces are likely to have distinct sensitive levels. In the example of HIV-1 infection prediction \cite{qi2010semi} mentioned above, it is obvious that the label indicating HIV-1 infected or uninfected is more sensitive than other health data. Thus, the data pieces with higher sensitive levels should obtain stronger protection. On the other hand, as claimed in \cite{wang2019privacy}, different servers present diverse trust degrees to users due to the distinct permissions to users' data. The servers having no direct connection with a user, compared with the server collecting his/her data, may be less trustworthy. Here, the user would require that the less trustworthy servers obtain his/her information under stronger privacy preservation. Therefore, we will investigate a privacy-aware DML framework that preserves heterogeneous privacy, where users' data pieces with distinct sensitive levels can obtain different privacy guarantee against servers of diverse trust degrees.

One challenging issue is to reduce the accumulation of privacy losses over ADMM iterations as much as possible, especially for the privacy guarantee of the most sensitive data pieces. Most existing ADMM-based private DML frameworks preserve privacy by perturbing the intermediate results shared by servers. Since each intermediate result is computed with users' original data, its release will disclose part of private information, implying that the privacy loss may increase as iterations proceed. Moreover, these private DML frameworks only provide the same privacy guarantee for all data pieces. In addition to intermediate information perturbation, original data randomization methods can be combined to provide heterogeneous privacy protection. However, such an approach introduces coupled uncertainties into the classification model. The lack of uncertainty decoupling methods leads to the performance quantification a challenging task.

In this paper, we propose a privacy-preserving distributed machine learning (PDML) framework to settle these challenging issues. After removing the trustworthy servers assumption, we incorporate the users' data reporting into the DML process, which forms a two-phase training scheme together with the distributed computing process. For privacy preservation, we adopt different approaches in the two phases. In Phase~1, a user first leverages a local randomization approach to obfuscate the most sensitive data pieces and sends the randomized version to a server. This technique provides the user with self-controlled privacy guarantee for the most sensitive information. Further, in Phase~2, multiple servers collaboratively train a common prediction model and there, they use a combined noise-adding method to perturb the communicated messages, which preserves privacy for users' less sensitive data pieces. Also, such perturbation strengthens the privacy preservation of data pieces with the highest sensitive level. For the performance of the PDML framework, we analyze the generalization error of current classifiers trained by different servers.

The main contributions of this paper are threefold:
\begin{enumerate}
\item A two-phase PDML framework is proposed to provide heterogeneous privacy protection in DML, where users' data pieces obtain different privacy guarantees depending on their sensitive levels and servers' trust degrees.
\item In Phase~1, we design a local randomization approach, which preserves DP for the users' most sensitive information. In Phase~2, a combined noise-adding method is devised to compensate the privacy protection of other data pieces.
\item The convergence property of the proposed ADMM-based privacy-aware algorithm is analyzed. We also give a theoretical bound of the difference between the generalization error of trained classifiers and the ideal optimal classifier.
\end{enumerate}

The remainder of this paper is organized as follows. Related works are discussed in Section~\ref{related_works}. We provide some preliminaries and formulate the problem in Section~\ref{pro_formulation}. Section~\ref{pp_framework} presents the designed privacy-preserving framework, and the performance is analyzed in Section~\ref{performance_ana}. In order to validate the classification performance, we use multiple real-world datasets and conduct experiments in Section~\ref{evaluation}. Finally, Section~\ref{conclusion} concludes the paper. A preliminary version \cite{wang2019differential} of this paper was accepted for presentation at IEEE CDC 2019. This paper contains a different privacy-preserving approach with a fully distributed ADMM setting, full proofs of the main results, and more experimental results.
\section{Related Works} \label{related_works}
As one of the important applications of distributed optimization, DML has received widespread attentions from researchers. Besides ADMM schemes, many distributed approaches have been proposed in the literature, e.g., subgradient descent methods \cite{nedic2009distributed}, local message-passing algorithms \cite{predd2009collaborative}, adaptive diffusion mechanisms \cite{chen2012diffusion}, and dual averaging approaches \cite{duchi2011dual}. Compared with these approaches, ADMM schemes achieve faster empirical convergence \cite{shi2014linear}, making it more suitable for large-scale DML tasks.

For privacy-preserving problems, cryptographic techniques \cite{biham2012differential, brakerski2014efficient, shoukry2016privacy} are often used to protect information from being inferred when the key is unknown. In particular, homomorphic encryption methods \cite{brakerski2014efficient}, \cite{shoukry2016privacy} allow untrustworthy servers to calculate with encrypted data, and this approach has been applied in an ADMM scheme \cite{zhang2019admm}. Nevertheless, such schemes unavoidably bring extra computation and communication overheads. Another commonly used approach to preserve privacy is random value perturbation \cite{dwork2008differential}, \cite{erlingsson2014rappor}, \cite{xu2012building}. DP has been increasingly acknowledged as the de facto criterion for non-encryption-based data privacy. This approach requires less costs but still provides strong privacy guarantee, though there exist tradeoffs between privacy and performance \cite{wang2019privacy}.

In recent years, random value perturbation-based approaches have been widely used to address privacy protection in distributed computing, especially in consensus problems \cite{francesco2019lectures}. For instance, \cite{wang2019privacy, nozari2017differentially, dpc2012}, \cite{mo2017privacy, manitara2013privacy, he2019consensus} provide privacy-preserving average consensus paradigms, where the mechanisms in \cite{wang2019privacy, nozari2017differentially, dpc2012} provide DP guarantee. Moreover, for a maximum consensus algorithm, \cite{wang2018differentially} gives a differentially private mechanism. Since these solutions mainly focus on simple statistical analysis (e.g., computation of average and maximum elements), there may exist difficulties in directly applying them to DML.

Privacy-preserving machine learning problems have also attracted a lot of attention recently. Under centralized scenarios, Chaudhuri et al. \cite{chaudhuri2011differentially} proposed a DP solution for an empirical risk minimization problem by perturbing the objective function with well-designed noise. For privacy-aware DML, Han et al. \cite{han2016differentially} also gave a differentially private mechanism, where the underlying distributed approach is subgradient descent. The works \cite{zhang2017dynamic} and \cite{ding2019optimal} present dynamic DP schemes for ADMM-based DML, where privacy guarantee is provided in each iteration. However, if a privacy violator uses the published information in all iterations to make inference, there will be no privacy guarantee. In addition, an obfuscated stochastic gradient method via correlated perturbations was proposed in \cite{gade2018privacy}, though it cannot provide DP preservation. Different from these works, in this paper we remove the trustworthy servers assumption. Moreover, we take into consideration the distinct sensitive levels of data pieces and the diverse trust degrees of servers, and propose the PDML framework providing heterogeneous privacy preservation.

\section{Preliminaries and Problem Statement} \label{pro_formulation}
In this section, we introduce the overall computation framework of DML and the ADMM algorithm used there. Moreover, the privacy-preserving problem for the framework is formulated with the definition of local differential privacy.
\subsection{System Setting}
We consider a collaborative DML framework to carry out classification problems based on data collected from a large number of users. Fig. \ref{framework} gives a schematic diagram. There are two parties involved: Users (or data contributors) and computing servers. The DML's goal is to train a classification model based on data of all users. It has two phases of data collection and distributed computation, called Phase~1 and Phase~2, respectively. In Phase~1, each user sends his/her data to the server, which is responsible to collect all the data from the user's group. In Phase~2, each computing server utilizes a distributed computing approach to cooperatively train the classifier through information interaction with other servers. The proposed DML framework is based on the one in \cite{wang2019privacy}, but the learning tasks are much more complex than the basic statistical analysis considered by \cite{wang2019privacy}.

\textbf{Network Model}. Consider $n\geq2$ computing servers participating in the framework where the $i$th server is denoted by $s_i$. We use an undirected and connected graph $\mathcal{G}=(\mathcal{S}, \mathcal{E})$ to describe the underlying communication topology, where $\mathcal{S}=\{s_i\;| \;i=1, 2, \ldots, n\}$ is the servers set and $\mathcal{E}\subseteq\mathcal{S}\times\mathcal{S}$ is the set of communication links between servers. The number of communication links in $\mathcal{G}$ is denoted by $E$, i.e., $E=|\mathcal{E}|$. Let the set of neighbor servers of $s_i$ be $\mathcal{N}_i=\{s_l\in \mathcal{S}\;| \;(s_i, s_l)\in \mathcal{E}\}$. The degree of server $s_i$ is denoted by $N_i=|\mathcal{N}_i|$.

Different servers collect data from different groups of users, and thus all users can be divided into $n$ distinct groups. The $i$th group of users, whose data is collected by server $s_i$, is denoted by the set $\mathcal{U}_i$, and $m_i=|\mathcal{U}_i|$ is the number of users in $\mathcal{U}_i$. Each user $j\in \mathcal{U}_i$ has a data sample $\mathbf{d}_{i, j}=(\mathbf{x}_{i, j}, y_{i, j})\in \mathcal{X}\times\mathcal{Y} \subseteq \mathbb{R}^{d+1}$, which is composed of a feature vector $\mathbf{x}_{i, j}\in\mathcal{X}\subseteq\mathbb{R}^d$ and the corresponding label $y_{i, j}\in\mathcal{Y} \subseteq \mathbb{R}$. In this paper, we consider a binary-classification problem. That is, there are two types of labels as $y_{i, j}\in\{-1,1\}$. Suppose that all data samples $\mathbf{d}_{i, j}, \forall i,j$, are drawn from an underlying distribution $\mathcal{P}$, which is unknown to the servers. Here, the learning goal is that the classifier trained with limited data samples can match the ideal model trained with known $\mathcal{P}$ as much as possible.
\begin{figure}
  \centering
  \includegraphics[scale=0.5]{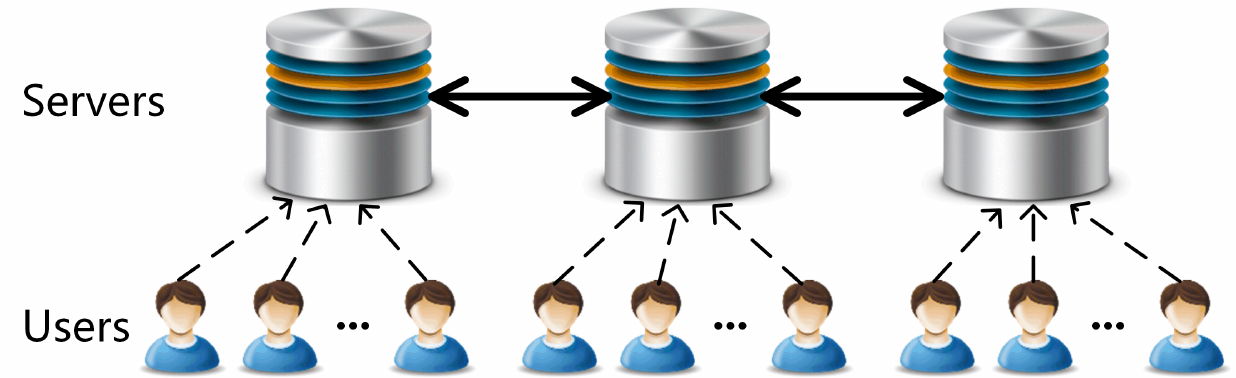}
  \caption{{\small Illustration of the DML framework}}\label{framework}
\end{figure}

\subsection{Classification Problem and ADMM Algorithm} \label{admm_alg}
We first introduce the classification problem solved by the two-phase DML framework. Let $\mathbf{w}: \mathcal{X}\rightarrow\mathcal{Y}$ be the trained classification model. The trained classifier $\mathbf{w}$ should guarantee that the accuracy of mapping any feature vector $\mathbf{x}_{i, j}$ (sampled from the distribution $\mathcal{P}$) to its correct label $y_{i, j}$ is high. We employ the method of regularized empirical risk minimization, which is a commonly used approach to find an appropriate classifier \cite{vapnik2013nature}. Denote the classifier trained by server $s_i$ as $\mathbf{w}_i\in\mathbb{R}^d$. The objective function (or the empirical risk) of the minimization problem is defined as
\begin{equation}\label{original_objective}
\small
J(\{\mathbf{w}_i\}_{i\in\mathcal{S}}):=\sum_{i=1}^n\left[\sum_{j=1}^{m_i}\frac{1}{m_i}\ell(y_{i, j}, \mathbf{w}_i^\mathrm{T}\mathbf{x}_{i, j})+\frac{a}{n}N(\mathbf{w}_i)\right],
\end{equation}
where $\ell: \mathbb{R}\times\mathbb{R}\rightarrow \mathbb{R}$ is the loss function measuring the performance of the trained classifier $\mathbf{w}_i$. The regularizer $N(\mathbf{w}_i)$ is introduced to mitigate overfitting, and $a>0$ is a constant. We take a bounded classifier class $\mathcal{W}\subset\mathbb{R}^d$ such that $\mathbf{w}_i\in\mathcal{W}, \forall i$. For the loss function $\ell(\cdot)$ and the regularizer $N(\cdot)$, we introduce the following assumptions \cite{chaudhuri2011differentially} \cite{zhang2017dynamic}.
\begin{assumption}\label{loss_assumption}
The loss function $\ell(\cdot)$ is convex and doubly differentiable in $\mathbf{w}$. In particular, $\ell(\cdot)$, $\frac{\partial\ell(\cdot)}{\partial \mathbf{w}}$ and $\frac{\partial\ell^2(\cdot)}{\partial \mathbf{w}^2}$ are bounded over the class $\mathcal{W}$ as
\begin{equation}\nonumber
\small
|\ell(\cdot)|\leq c_1, \left|\frac{\partial\ell(\cdot)}{\partial \mathbf{w}}\right|\leq c_2, \left|\frac{\partial\ell^2(\cdot)}{\partial \mathbf{w}^2}\right|\leq c_3,
\end{equation}
where $c_1$, $c_2$ and $c_3$ are positive constants. Moreover, it holds $\frac{\partial\ell^2(y,\mathbf{w}^\mathrm{T}\mathbf{x})}{\partial {\mathbf{w}}^2}=\frac{\partial\ell^2(-y,\mathbf{w}^\mathrm{T}\mathbf{x})}{\partial {\mathbf{w}}^2}$.
\end{assumption}

\begin{assumption}\label{regularizer_assumption}
The regularizer $N(\cdot)$ is doubly differentiable and strongly convex with parameter $\kappa>0$, i.e., $\forall \mathbf{w}_1, \mathbf{w}_2 \in \mathcal{W}$,
\vspace{-0.4cm}
\begin{equation}\label{strongly_convex}
\small
N(\mathbf{w}_1)-N(\mathbf{w}_2)\geq \nabla N(\mathbf{w}_1)^\mathrm{T}(\mathbf{w}_2-\mathbf{w}_1)+\frac{\kappa}{2}\|\mathbf{w}_2-\mathbf{w}_1\|_2^2,
\end{equation}
\vspace{-0.2cm}
where $\nabla N(\cdot)$ indicates the gradient with respect to $\mathbf{w}$.
\end{assumption}

We note that $J(\{\mathbf{w}_i\}_{i\in\mathcal{S}})$ in (\ref{original_objective}) can be separated into $n$ different parts, where each part is the objective function of the local minimization problem to be solved by each server. The objective function of server $s_i$ is
\begin{equation}\label{local_objective}
\small
J_i(\mathbf{w}_i):=\sum_{j=1}^{m_i}\frac{1}{m_i}\ell(y_{i, j}, \mathbf{w}_i^\mathrm{T}\mathbf{x}_{i, j})+\frac{a}{n}N(\mathbf{w}_i).
\end{equation}

Since $\mathbf{w}_i$ is trained based on the data of the $i$th group of users, it may only partially reflect data characteristics. To find a common classifier taking account of all participating users, we place a global consensus constraint in the minimization problem, as $\mathbf{w}_i=\mathbf{w}_l, \forall s_i, s_l\in\mathcal{S}$. However, since we use a connected graph to describe the interaction between servers, we have to utilize a local consensus constraint:
\begin{equation}\label{constraint}
\small
\mathbf{w}_i=\mathbf{z}_{il}, \quad \mathbf{w}_l=\mathbf{z}_{il}, \quad \forall (s_i, s_l)\in \mathcal{E},
\end{equation}
where $\mathbf{z}_{il}\in\mathbb{R}^d$ is an auxiliary variable enforcing consensus between neighbor servers $s_i$ and $s_l$. Obviously, (\ref{constraint}) also implies global consensus. We can now write the whole regularized empirical risk minimization problem as follows \cite{forero2010consensus}.
\begin{problem} \label{problem_1}
\begin{small}
\begin{alignat}{2}
    \min_{\{\mathbf{w}_i\}, \{\mathbf{z}_{i, l}\}} & \sum_{i=1}^n\left[\sum_{j=1}^{m_i}\frac{1}{m_i}\ell(y_{i, j}, \mathbf{w}_i^\mathrm{T}\mathbf{x}_{i, j})+\frac{a}{n}N(\mathbf{w}_i)\right] \label{minimization_pro} \\
    \mathrm{s.t.} \quad & \mathbf{w}_i=\mathbf{z}_{il}, \quad \mathbf{w}_l=\mathbf{z}_{il}, \quad \forall (s_i, s_l)\in \mathcal{E}.
\end{alignat}
\end{small}
\end{problem}

Next, we establish a compact form of Problem~\ref{problem_1}. Let $\mathbf{w}:=[\mathbf{w}_1^\mathrm{T} \cdots \mathbf{w}_n^\mathrm{T}]^\mathrm{T}\in\mathbb{R}^{nd}$ and $\mathbf{z}\in\mathbb{R}^{2Ed}$ be vectors aggregating all classifiers $\mathbf{w}_i$ and auxiliary variables $\mathbf{z}_{il}$, respectively. To transfer all local consensus constraints into a matrix form, we introduce two block matrices $A_1, A_2\in \mathbb{R}^{{2Ed}\times{nd}}$, which are partitioned into $2E\times n$ submatrices with dimension $d\times d$. For the communication link $(s_i, s_l)\in\mathcal{E}$, if $\mathbf{z}_{il}$ is the $m$th block of $\mathbf{z}$, then the $(m, i)$th submatrix of $A_1$ and $(m, l)$th submatrix of $A_2$ are the $d\times d$ identity matrix $I_d$; otherwise, these submatrices are the $d\times d$ zero matrix $0_d$. We write $J(\mathbf{w})=\sum_{i=1}^n J(\mathbf{w}_i)$, $A:=[A_1^{\mathrm{T}} A_2^{\mathrm{T}}]^{\mathrm{T}}$, and $B:=[-I_{2Ed}\; -\!I_{2Ed}]^{\mathrm{T}}$. Then, Problem~\ref{problem_1} can be written in a compact form as
\begin{alignat}{2}
    \min_{\mathbf{w}, \mathbf{z}} \quad & J(\mathbf{w}) \label{minimization_pro_mat} \\
    \mathrm{s.t.} \quad & A\mathbf{w}+B\mathbf{z}=0. \label{constraint_mat}
\end{alignat}

For solving this problem we introduce the fully distributed ADMM algorithm from \cite{shi2014linear}. The augmented Lagrange function associated with (\ref{minimization_pro_mat}) and (\ref{constraint_mat}) is given by $\mathcal{L}(\mathbf{w}, \mathbf{z}, \boldsymbol{\lambda}) := J(\mathbf{w}) +\boldsymbol{\lambda}^{\mathrm{T}}(A\mathbf{w}+B\mathbf{z})+ \frac{\beta}{2}\|A\mathbf{w}+B\mathbf{z}\|_2^2$, where $\boldsymbol{\lambda}\in\mathbb{R}^{4Ed}$ is the dual variable ($\mathbf{w}$ is correspondingly called the primal variable) and $\beta\in \mathbb{R}$ is the penalty parameter.

At iteration $t+1$, the solved optimal auxiliary variable $\mathbf{z}$ satisfies the relation $\nabla \mathcal{L}(\mathbf{w}(t+1), \mathbf{z}(t+1), \boldsymbol{\lambda}(t))=0$. Through some simple transformation, we have $B^\mathrm{T}\boldsymbol{\lambda}(t+1)=0$. Let $\boldsymbol{\lambda}=[\boldsymbol{\xi}^{\mathrm{T}} \boldsymbol{\zeta}^{\mathrm{T}}]^{\mathrm{T}}$ with $\boldsymbol{\xi}, \boldsymbol{\zeta}\in\mathbb{R}^{2Ed}$. If we set the initial value of $\boldsymbol{\lambda}$ to $\boldsymbol{\xi}(0)=-\boldsymbol{\zeta}(0)$, we have $\boldsymbol{\xi}(t)=-\boldsymbol{\zeta}(t), \forall t\geq 0$. Thus, we can obtain the complete dual variable $\boldsymbol{\lambda}$ by solving $\boldsymbol{\xi}$. Let
\begin{equation}\nonumber
\small
L_{+} := \frac{1}{2}(A_1+A_2)^\mathrm{T}(A_1+A_2),
L_{-} := \frac{1}{2}(A_1-A_2)^\mathrm{T}(A_1-A_2).
\end{equation}
Define a new dual variable $\boldsymbol{\gamma}:=(A_1-A_2)^\mathrm{T}\boldsymbol{\xi}\in\mathbb{R}^{nd}$. Through the simplification process in \cite{shi2014linear}, we obtain the fully distributed ADMM for solving Problem~\ref{problem_1}, which is composed of the following iterations:
\begin{alignat}{2}
     \small \nabla J(\mathbf{w}(t+1)) +\boldsymbol{\gamma}(t) + \beta(L_{+}+L_{-})\mathbf{w}(t+1) -  \beta L_{+}\mathbf{w}(t)& = 0, \nonumber \\
     \small \boldsymbol{\gamma}(t+1) - \boldsymbol{\gamma}(t) - \beta L_{-}\mathbf{w}(t+1)& = 0. \nonumber
\end{alignat}
Note that $\boldsymbol{\gamma}$ is also a compact vector of all local dual variables $\boldsymbol{\gamma}_i\in\mathbb{R}^{d}$ for $s_i\in\mathcal{S}$, i.e., $\boldsymbol{\gamma}=[\boldsymbol{\gamma}_1^\mathrm{T} \cdots \boldsymbol{\gamma}_n^\mathrm{T}]^\mathrm{T}$.

The above ADMM iterations can be separated into $n$ different parts, which are solved by the $n$ different servers. At iteration $t+1$, the information used by server $s_i$ to update a new primal variable $\mathbf{w}_i(t+1)$ includes users' data $\mathbf{d}_{i,j}, \forall j$, current classifiers $\left\{\mathbf{w}_l(t)\;|\; l\in\mathcal{N}_i\bigcup\{i\}\right\}$ and dual variable $\boldsymbol{\gamma}_i(t)$. The local augmented Lagrange function $\mathcal{L}_i(\mathbf{w}_i, \mathbf{w}_i(t), \boldsymbol{\gamma}_i(t))$ associated with the primal variable update is given by
\begin{equation}\nonumber
\small
\begin{split}
&  \mathcal{L}_i(\mathbf{w}_i, \{\mathbf{w}_l(t)\}_{l\in \mathcal{N}_i\bigcup\{i\}},\boldsymbol{\gamma}_i(t)) \\
& :=J_i(\mathbf{w}_i) +\boldsymbol{\gamma}_i^{\mathrm{T}}(t)\mathbf{w}_i
+\beta\sum_{l\in\mathcal{N}_i}\left\|\mathbf{w}_i-\frac{1}{2}(\mathbf{w}_i(t)+\mathbf{w}_l(t))\right\|_2^2.
\end{split}
\end{equation}
At each iteration, server $s_i$ will update its primal variable $\mathbf{w}_i(t+1)$ and dual variable $\boldsymbol{\gamma}_i(t+1)$ as follows:
\begin{alignat}{2}
     \small \mathbf{w}_i(t+1)& = \arg\min_{\mathbf{w}_i} \mathcal{L}_i(\mathbf{w}_i, \{\mathbf{w}_l(t)\}_{l\in \mathcal{N}_i\bigcup\{i\}}, \boldsymbol{\gamma}_i(t)), \label{new_primal_local} \\
     \small \boldsymbol{\gamma}_i(t+1)& =  \boldsymbol{\gamma}_i(t) + \beta \sum_{l\in\mathcal{N}_i}\left(\mathbf{w}_i(t+1) - \mathbf{w}_l(t+1)\right). \label{new_dual_local}
\end{alignat}
Clearly, in (\ref{new_primal_local}) and (\ref{new_dual_local}), the information communicated between computing servers is the newly updated classifiers.

\subsection{Privacy-preserving Problem}
In this subsection, we introduce the privacy-preserving problem in the DML framework. The private information to be preserved is first defined, followed by the introduction of privacy violators and information used for privacy inference. Further, we present the objectives of the two phases.

\textbf{Private information}. For users, both the feature vectors and the labels of the data samples contain their sensitive information. The private information contained in the feature vectors may be the ID, gender, general health data and so on. However, the labels may indicate, for example, whether a patient contracts a disease (e.g., HIV-1 infected) or whether a user has a special identity (e.g., a member of a certain group). We can see that compared with the feature vectors, the labels may be more sensitive for the users. In this paper, we consider that the labels of users' data are the most sensitive information, which should be protected with priority and obtain stronger privacy guarantee than that of feature vectors.

\textbf{Privacy attacks}. All computing servers are viewed as untrustworthy potential privacy violators desiring to infer the sensitive information contained in users' data. In the meantime, different servers present distinct trust degrees to users. User $j\in\mathcal{U}_i$ divides the potential privacy violators into two types. The server $s_i$, collecting user $j$'s data directly, is the first type. Other servers $s_l\in\mathcal{S}, s_l\neq s_i$, having no direct connection with user $j$, are the second type. Compared with server $s_i$, other servers are less trustworthy for user $j$. To conduct privacy inference, the first type of privacy violators leverages user $j$'s reported data while the second type can utilize only the intermediate information shared by servers.

\textbf{Privacy protections in Phases 1 \& 2}. Since the label of user $j\in\mathcal{U}_i$ is the most sensitive information, its original value should not be disclosed to any servers including server $s_i$. Thus, during the data reporting process in Phase~1, user $j$ must obfuscate the private label in his/her local device. For the less sensitive feature vector, considering that server $s_i$ is more trustworthy, user $j$ can choose to transmit the original version to that server. Nevertheless, the user is still unwilling to disclose the raw feature vector to servers with lower trust degrees. Hence, in this paper, when server $s_i$ interacts with other servers to find a common classifier in Phase~2, the released information about user $j$'s data will be further processed before communication. 

More specifically, in Phase 1, to obfuscate the labels, we use a local randomization approach, whose privacy-preserving property will be measured by local differential privacy (LDP) \cite{erlingsson2014rappor}. LDP is developed from differential privacy (DP), which is originally defined for trustworthy databases to publish aggregated private information \cite{dwork2008differential}. The privacy preservation idea of DP is that for any two neighbor databases differing in one record (e.g., one user selects to report or not to report his/her data to the server) as input, a randomized mechanism is adopted to guarantee the two outputs to have high similarity so that privacy violators cannot identify the different record with high confidence. Since there is no trusted server for data collection in our setting, users locally perturb their original labels and report noisy versions to the servers.

To this end, we define a randomized mechanism $M: \mathbb{R}^{d+1}\rightarrow \mathbb{R}^{d+1}$, which takes a data sample as input and outputs its noisy version. The definition of LDP is given as follows.
\begin{definition}\label{LDP_def}
($\epsilon$-LDP). Given $\epsilon>0$, a randomized mechanism $M(\cdot)$ preserves $\epsilon$-LDP if for any two data samples $\mathbf{d}_1=(\mathbf{x}_1, y_1)$ and $\mathbf{d}_2=(\mathbf{x}_2, y_2)$ satisfying $\mathbf{x}_2=\mathbf{x}_1$ and $y_2=-y_1$, and any observation set $\mathcal{O}\subseteq \textrm{Range}(M)$, it holds
\begin{equation}\label{eq_ldp}
\small
\Pr[M(\mathbf{d}_1)\in \mathcal{O}] \leq e^{\epsilon}\Pr[M(\mathbf{d}_2)\in \mathcal{O}].
\end{equation}
\end{definition}

In (\ref{eq_ldp}), the parameter $\epsilon$ is called the privacy preserving degree (PPD), which describes the strength of privacy guarantee of $M(\cdot)$. A smaller $\epsilon$ implies stronger privacy guarantee. This is because smaller $\epsilon$ means that the two outputs $M(\mathbf{d}_1)$ and $M(\mathbf{d}_2)$ are closer, making it more difficult for privacy violators to infer the difference in $\mathbf{d}_1$ and $\mathbf{d}_2$ (i.e., $y_1$ and $y_2$).

\subsection{System Overview}
In this paper, we propose the PDML framework, where users can obtain heterogeneous privacy protection. The heterogeneity is characterized by two aspects: i) When a user faces a privacy violator, his/her data pieces with distinct sensitive levels (i.e., the feature vector and the label) obtain different privacy guarantees; ii) for one type of private data piece, the privacy protection provided by the framework is stronger against privacy violators with low trust degrees than those with higher trust degrees. Particularly, in our approach, the privacy preservation strength of users' labels is controlled by the users. Moreover, a modified ADMM algorithm is proposed to meet the heterogeneous privacy protection requirement.

The workflow of the proposed PDML framework is illustrated in Fig. \ref{workflow}. Some details are explained below.
\begin{enumerate}
  \item In Phase 1, a user first appropriately randomizes the private label, and then sends the noisy label and the original feature vector to a computing server. The randomization approach used here determines the PPD of the label.
  \item In Phase 2, multiple computing servers collaboratively train a common classifier based on their collected data. To protect privacy of feature vectors against less trustworthy servers, we further use a combined noise-adding method to perturb the ADMM algorithm, which also strengthens the privacy guarantee of users' labels.
  \item The performance of the trained classifiers is analyzed in terms of their generalization errors. To decompose the effects of uncertainties introduced in the two phases, we modify the loss function in Problem~\ref{problem_1}. We finally quantify the difference between the generalization error of trained classifiers and that of the ideal optimal classifier.
\end{enumerate}
\begin{figure}
  \centering
  \includegraphics[scale=0.5]{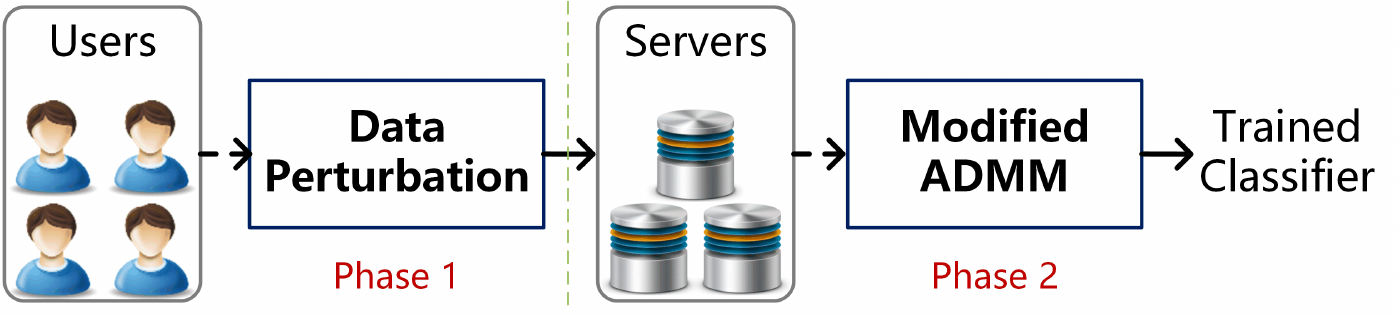}
  \caption{{\small Workflow of the PDML framework}}\label{workflow}
\end{figure}
\section{Privacy-Preserving Framework Design} \label{pp_framework}
In this section, we introduce the privacy-preserving approaches used in Phases 1 and 2, and analyze their properties.
\subsection{Privacy-Preserving Approach in Phase 1}
In this subsection, we propose an appropriate approach used in Phase~1 to provide privacy preservation for the most sensitive labels. In particular, it is controlled by users and will not be weakened in Phase~2.

We adopt the idea of randomized response (RR) \cite{erlingsson2014rappor} to obfuscate the users' labels. Originally, RR was used to set plausible deniability for respondents when they answer survey questions about sensitive topics (e.g., HIV-1 infected or uninfected). When using RR, respondents only have a certain probability to answer questions according to their true situations, making the server unable to determine with certainty whether the reported answers are true.

In our setting, user $j\in\mathcal{U}_i$ randomizes the label through RR and sends the noisy version to server $s_i$. This is done by the randomized mechanism $M$ defined below.
\begin{definition}\label{randomized_M}
For $p\in(0,\frac{1}{2})$, the randomized mechanism $M$ with input data sample $\mathbf{d}_{i, j}=(\mathbf{x}_{i, j}, y_{i, j})$ is given by $M(\mathbf{d}_{i, j})=(\mathbf{x}_{i, j}, y'_{i, j})$, where
\begin{equation} \label{randomization}
\small
y'_{i, j}=
\begin{cases}
1, &\text{with probability $p$} \\
-1, &\text{with probability $p$} \\
y_{i, j}, &\text{with probability $1-2p$}.
\end{cases}
\end{equation}
\end{definition}

In the above definition, $p$ is the randomization probability controlling the level of data obfuscation. Obviously, a larger $p$ implies higher uncertainty on the reported label, making it harder for the server to learn the true label.

Denote the output $M(\mathbf{d}_{i, j})$ as $\mathbf{d}'_{i, j}$, i.e., $\mathbf{d}'_{i, j}=M(\mathbf{d}_{i, j})=(\mathbf{x}_{i, j}, y'_{i, j})$. After the randomization, $\mathbf{d}'_{i, j}$ will be transmitted to the server. In this case, server $s_i$ can use only $\mathbf{d}'_{i, j}$ to train the classifier, and the released information about the true label $y_{i, j}$ in Phase~2 is computed based on $\mathbf{d}'_{i, j}$. This implies that once $\mathbf{d}'_{i, j}$ is reported to the server, no more information about the true label $y_{i, j}$ will be released. In this paper, we set the randomization probability $p$ in (\ref{randomization}) as
\begin{equation}\label{eq_p}
\small
p=\frac{1}{1+e^\epsilon},
\end{equation}
where $\epsilon>0$. The following theorem gives the privacy-preserving property of the randomized mechanism in Definition~\ref{randomized_M}, justifying this choice of $p$ from the viewpoint of LDP.
\begin{proposition}\label{privacy_preservation}
Under (\ref{eq_p}), the randomized mechanism $M(\mathbf{d}_{i, j})$ preserves $\epsilon$-LDP for $\mathbf{d}_{i, j}$.
\end{proposition}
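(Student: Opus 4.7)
The plan is to reduce the LDP inequality \eqref{eq_ldp} to a two-point comparison on the label coordinate, since the feature vector is untouched by $M$. Given the two inputs $\mathbf{d}_1=(\mathbf{x}_1, y_1)$ and $\mathbf{d}_2=(\mathbf{x}_1, -y_1)$ specified in Definition~\ref{LDP_def}, the output of $M$ is supported only on the pair $\{(\mathbf{x}_1, +1),(\mathbf{x}_1, -1)\}$. Thus any observation set $\mathcal{O}\subseteq\mathbb{R}^{d+1}$ contributes to both sides of \eqref{eq_ldp} only through its intersection with this two-point set, and it suffices to check the inequality for the two singleton events $\{(\mathbf{x}_1,+1)\}$ and $\{(\mathbf{x}_1,-1)\}$.

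Next, I would read the conditional probabilities directly from Definition~\ref{randomized_M}: if $y_{i,j}=+1$ then $\Pr[y'_{i,j}=+1]=p+(1-2p)=1-p$ and $\Pr[y'_{i,j}=-1]=p$, and symmetrically if $y_{i,j}=-1$. Therefore the two worst-case likelihood ratios are
\begin{equation}\nonumber
\frac{\Pr[M(\mathbf{d}_1)=(\mathbf{x}_1,+y_1)]}{\Pr[M(\mathbf{d}_2)=(\mathbf{x}_1,+y_1)]}=\frac{1-p}{p},
\end{equation}
and the identical value for the opposite label. So \eqref{eq_ldp} for singletons is equivalent to $(1-p)/p\le e^{\epsilon}$.

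Substituting the prescribed $p=1/(1+e^{\epsilon})$ from \eqref{eq_p} gives $(1-p)/p=e^{\epsilon}$ exactly, so the singleton version of \eqref{eq_ldp} holds (with equality, showing the choice is tight). Lifting back to arbitrary $\mathcal{O}$ is a one-line argument: write $\Pr[M(\mathbf{d}_k)\in\mathcal{O}]=\sum_{y\in\{-1,+1\}}\mathbf{1}\{(\mathbf{x}_1,y)\in\mathcal{O}\}\Pr[M(\mathbf{d}_k)=(\mathbf{x}_1,y)]$ for $k=1,2$, and apply the singleton bound term by term.

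No step is really an obstacle here; the proof is a direct calculation. The only point that deserves care is making explicit that although $M$ is declared as a map into $\mathbb{R}^{d+1}$, its actual support under either input is the two-point set above, which is why the inequality over arbitrary measurable $\mathcal{O}$ reduces to a finite check and no measure-theoretic complications arise.
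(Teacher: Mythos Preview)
Your proof is correct and follows essentially the same approach as the paper: compute the two conditional label probabilities $1-p$ and $p$, form the likelihood ratio $(1-p)/p$, and substitute $p=1/(1+e^{\epsilon})$ to obtain $e^{\epsilon}$. Your treatment is in fact slightly more careful than the paper's in explicitly reducing the arbitrary observation set $\mathcal{O}$ to the two-point support before taking the ratio.
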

The proof can be found in Appendix \ref{proof_p1}.

Proposition \ref{privacy_preservation} clearly indicates that the users can tune the randomization probability according to their privacy demands. This can be seen as given a randomization probability $p$, by (\ref{eq_p}), the PPD $\epsilon$ provided by $M(\mathbf{d}_{i, j})$ is $\epsilon=\ln \frac{1-p}{p}$. Obviously, a larger randomization probability leads to smaller PPD, indicating stronger privacy guarantee.

If all data samples $\mathbf{d}_{i, j}, \forall i, j$, drawn from the distribution $\mathcal{P}$ are randomized through $M$, the noisy data $\mathbf{d}'_{i, j}, \forall i, j$, can be considered to be obtained from a new distribution $\mathcal{P}_{\epsilon}$, which is related to the PPD $\epsilon$. Note that $\mathcal{P}_{\epsilon}$ is also an unknown distribution due to the unknown $\mathcal{P}$.
\subsection{Privacy-Preserving Approach in Phase 2} \label{ppa_2}
To deal with less trustworthy servers in Phase~2, we devise a combined noise-adding approach to simultaneously preserve privacy for users' feature vectors and enhance the privacy guarantee of users' labels. We first adopt the method of objective function perturbation \cite{chaudhuri2011differentially}. That is, before solving Problem \ref{problem_1}, the servers perturb the objective function $J(\{\mathbf{w}_i\}_{i\in\mathcal{S}})$ with random noises. For server $s_i\in \mathcal{S}$, the perturbed objective function is given by 
\begin{equation}\label{per_obj}
\small
\widetilde{J}_i(\mathbf{w}_i):=J_i(\mathbf{w}_i)+\frac{1}{n}\boldsymbol{\eta}_i^{\mathrm{T}}\mathbf{w}_i,
\end{equation}
where $J_i(\mathbf{w}_i)$ is the local objective function given in (\ref{local_objective}), and $\boldsymbol{\eta}_i\in\mathbb{R}^{d}$ is a bounded random noise with arbitrary distribution. Let $R$ be the bound of noises $\boldsymbol{\eta}_i, \forall i$, namely, $\|\boldsymbol{\eta}_i\|_{\infty}\leq R$. Denote the sum of $\widetilde{J}_i(\mathbf{w}_i)$ as $\widetilde{J}(\{\mathbf{w}_i\}_{i\in\mathcal{S}}):=\sum_{i=1}^{n} \widetilde{J}_i(\mathbf{w}_i)$. 

\textbf{Limitation of objective function perturbation}. We remark that in our setting, the objective function perturbation in (\ref{per_obj}) is not sufficient to provide reliable privacy guarantee. This is because each server publishes current classifier multiple times and each publication utilizes users' reported data. Note that in the more centralized setting of \cite{chaudhuri2011differentially}, the classifier is only published once. More specifically, according to (\ref{new_primal_local}), $\mathbf{w}_i(t+1)$ is the solution to $\nabla \mathcal{L}_i(\mathbf{w}_i, \{\mathbf{w}_l(t)\}_{l\in \mathcal{N}_i\bigcup\{i\}}, \boldsymbol{\gamma}_i(t))=0$. In this case, it holds $\nabla \widetilde{J}_i(\mathbf{w}_i(t+1))= -\boldsymbol{\gamma}_i(t)
+\beta\sum_{l\in\mathcal{N}_i}\left(\mathbf{w}_i(t)+\mathbf{w}_l(t)-2\mathbf{w}_i(t+1)\right)$. As (\ref{new_dual_local}) shows, the dual variable $\boldsymbol{\gamma}_i(t)$ can be deduced from updated classifiers. Thus, if $s_i$'s neighbor servers have access to $\mathbf{w}_i(t+1)$ and $\left\{\mathbf{w}_l(t)\;|\;l\in\{i\}\bigcup \mathcal{N}_i\right\}$, then they can easily compute $\nabla \widetilde{J}_i(\mathbf{w}_i(t+1))$.

We should highlight that multiple releases of $\nabla \widetilde{J}_i(\mathbf{w}_i(t+1))$ increase the risk of users' privacy disclosure. This can be explained as follows. First, note that $\nabla \widetilde{J}_i(\mathbf{w}_i)=\nabla J_i(\mathbf{w}_i)+\frac{1}{n}\boldsymbol{\eta}_i$, where $\nabla J_i(\mathbf{w}_i)$ contains users' private information. The goal of $\boldsymbol{\eta}_i$-perturbation is to protect $\nabla J_i(\mathbf{w}_i)$ not to be derived directly by other servers. However, after publishing an updated classifier $\mathbf{w}_i(t+1)$, server $s_i$ releases a new gradient $\nabla \widetilde{J}_i(\cdot)$. Since the noise $\boldsymbol{\eta}_i$ is fixed for all iterations, each release of $\nabla \widetilde{J}_i(\cdot)$ means disclosing more information about $\nabla J_i(\cdot)$. In particular, we have $\nabla \widetilde{J}_i(\mathbf{w}_i(t+1))-\nabla \widetilde{J}_i(\mathbf{w}_i(t))=\nabla J_i(\mathbf{w}_i(t+1))-\nabla J_i(\mathbf{w}_i(t))$. That is, the effect of the added noise $\boldsymbol{\eta}_i$ can be cancelled by integrating the gradients of objective functions at different time instants.

\textbf{Modified ADMM by primal variable perturbation}. To ensure appropriate privacy preservation in Phase~2, we adopt an extra perturbation method, which sets obstructions for other servers to obtain the gradient $\nabla J_i(\cdot)$. Specifically, after deriving classifier $\mathbf{w}_i(t)$, server $s_i$ first perturbs $\mathbf{w}_i(t)$ with a Gaussian noise $\boldsymbol{\theta}_i(t)$ whose variance is decaying as iterations proceed, and then sends a noisy version of $\mathbf{w}_i(t)$ to neighbor servers. This is denoted by $\widetilde{\mathbf{w}}_i(t):=\mathbf{w}_i(t) + \boldsymbol{\theta}_i(t)$, where $\boldsymbol{\theta}_i(t)\sim\mathcal{N}(0, \rho^{t-1}V_i^2I_d)$ with decaying rate $0<\rho<1$.

The local augmented Lagrange function associated with $\boldsymbol{\eta}_i$-perturbed objective function $\widetilde{J}_i(\mathbf{w}_i)$ in (\ref{per_obj}) is given by
\begin{equation}\nonumber
\small
\begin{split}
& \tilde{\mathcal{L}}_i(\mathbf{w}_i, \{\widetilde{\mathbf{w}}_l(t)\}_{l\in \mathcal{N}_i\bigcup\{i\}},\boldsymbol{\gamma}_i(t)) \\
& :=\widetilde{J}_i(\mathbf{w}_i)  +\boldsymbol{\gamma}_i^{\mathrm{T}}(t)\mathbf{w}_i
+\beta\sum_{l\in\mathcal{N}_i}\left\|\mathbf{w}_i-\frac{1}{2}(\widetilde{\mathbf{w}}_i(t)+\widetilde{\mathbf{w}}_l(t))\right\|_2^2.
\end{split}
\end{equation}

We then introduce the perturbed version of the ADMM algorithm in (\ref{new_primal_local}) and (\ref{new_dual_local}) as
\begin{alignat}{2}
     \small  \mathbf{w}_i(t+1)& = \arg\min_{\mathbf{w}_i} \tilde{\mathcal{L}}_i(\mathbf{w}_i, \{\widetilde{\mathbf{w}}_l(t)\}_{l\in \mathcal{N}_i\bigcup\{i\}}, \boldsymbol{\gamma}_i(t)), \label{wi_update} \\
     \small \widetilde{\mathbf{w}}_i(t+1)& = \mathbf{w}_i(t+1) + \boldsymbol{\theta}_i(t+1), \label{wi_perturb}\\
     \small \boldsymbol{\gamma}_i(t+1)& =  \boldsymbol{\gamma}_i(t) + \beta \sum_{l\in\mathcal{N}_i}\left(\widetilde{\mathbf{w}}_i(t+1) - \widetilde{\mathbf{w}}_l(t+1)\right). \label{gammai_update}
\end{alignat}
At iteration $t+1$, a new classifier $\mathbf{w}_i(t+1)$ is first obtained by solving $\nabla \tilde{\mathcal{L}}_i(\mathbf{w}_i, \widetilde{\mathbf{w}}_i(t), \boldsymbol{\gamma}_i(t))=0$. Then, server $s_i$ will send $\widetilde{\mathbf{w}}_i(t+1)$ out and wait for the updated classifiers from neighbor servers.  At the end of an iteration, the server will update the dual variable $\boldsymbol{\gamma}_i(t+1)$.
\subsection{Discussions} \label{discussion_privacy}
We now discuss the effectiveness of the primal variable perturbation. It is emphasized that at each iteration, $s_i$ only releases a small amount of information about $\nabla \widetilde{J}_i(\mathbf{w}_i(t+1))$ through the communicated $\widetilde{\mathbf{w}}_i(t+1)$. Although $\boldsymbol{\gamma}_i(t)$ and $\left\{\widetilde{\mathbf{w}}_l(t)\;|\;l\in\{i\}\bigcup \mathcal{N}_i\right\}$ are known to $s_i$'s neighbors, $\nabla \widetilde{J}_i(\mathbf{w}_i(t+1))$ cannot be directly computed due to the unknown $\boldsymbol{\theta}_i(t+1)$. More specifically, observe that by (\ref{wi_update}), we have $\nabla \widetilde{J}_i(\mathbf{w}_i(t+1))= -\boldsymbol{\gamma}_i(t)+\beta\sum_{l\in\mathcal{N}_i}\left(\widetilde{\mathbf{w}}_i(t)+\widetilde{\mathbf{w}}_l(t)\right) -2\beta N_i(\widetilde{\mathbf{w}}_i(t+1) -\boldsymbol{\theta}_i(t+1))$, where $N_i$ is the degree of $s_i$.

On the other hand, using available information, other servers can compute only $\nabla \widetilde{J}_i(\widetilde{\mathbf{w}}_i(t+1))$, i.e., the gradient with respect to perturbed classifier $\widetilde{\mathbf{w}}_i(t+1)$. We have $\nabla \widetilde{J}_i(\widetilde{\mathbf{w}}_i(t+1))=-\boldsymbol{\gamma}_i(t)-\beta\sum_{l\in\mathcal{N}_i}\left[2\widetilde{\mathbf{w}}_i(t+1)-(\widetilde{\mathbf{w}}_i(t)+ \widetilde{\mathbf{w}}_l(t))\right]$. Thus, we obtain $\nabla \widetilde{J}_i(\widetilde{\mathbf{w}}_i(t+1))-\nabla \widetilde{J}_i(\widetilde{\mathbf{w}}_i(t))= \nabla J_i(\mathbf{w}_i(t+1))-\nabla J_i(\mathbf{w}_i(t))-2\beta N_i(\boldsymbol{\theta}_i(t+1)-\boldsymbol{\theta}_i(t))$. Hence, due to $\boldsymbol{\theta}_i$, it would not be helpful for inferring $\nabla J_i(\cdot)$ to integrate the gradients of the objective functions at different iterations. 

We should also observe that since $\lim_{t\rightarrow\infty} \boldsymbol{\theta}_i(t+1)=0$, $\nabla \widetilde{J}_i(\mathbf{w}_i(t+1))$ can be derived when $t\rightarrow\infty$. Moreover, it is clear that the relation $\nabla \widetilde{J}_i(\widetilde{\mathbf{w}}_i(t+1))-\nabla \widetilde{J}_i(\widetilde{\mathbf{w}}_i(t))= \nabla J_i(\mathbf{w}_i(t+1))-\nabla J_i(\mathbf{w}_i(t))$ holds for $t\rightarrow\infty$. However, $\nabla \widetilde{J}_i(\cdot)$ is the result of $\nabla J_i(\cdot)$ under $\boldsymbol{\eta}_i$-perturbation. Moreover, due to the local consensus constraint (\ref{constraint}), the trained classifiers $\mathbf{w}_i(t)$ may not have significant differences when $t\rightarrow\infty$. Such limited information is not sufficient for privacy violators to infer $\nabla J_i(\cdot)$ with high confidence.

\textbf{Differential privacy analysis}. We remark that in our scheme, the noise $\boldsymbol{\eta}_i$ added to the objective function provides underlying privacy protection in Phase~2. Even if privacy violators make inference with published $\widetilde{\mathbf{w}}_i$ in all iterations, the disclosed information is users' reported data plus extra noise perturbation. If the objective function perturbation is removed, the primal variable perturbation method cannot provide DP guarantee when $t\rightarrow\infty$. It is proved in \cite{zhang2017dynamic} and \cite{ding2019optimal} that the $\mathbf{w}_i$-perturbation in (\ref{wi_perturb}) preserves dynamic DP. According to the composition theorem of DP \cite{dwork2008differential}, the PPD will increase (indicating weaker privacy guarantee) when other servers obtain the perturbed classifiers $\widetilde{\mathbf{w}}_i$ of multiple iterations. In particular, if the perturbed classifiers in all iterations are used for inference, the PPD will be $\infty$, implying no privacy guarantee any more.
\begin{remark}
The objective function perturbation given in (\ref{per_obj}) preserves the so-called $(\epsilon_p, \delta_p)$-DP \cite{he2017differential}. Also, according to \cite{chaudhuri2011differentially}, the perturbation in (\ref{per_obj}) preserves $\epsilon_2$-DP if $\boldsymbol{\eta}_i$ has density $f(\boldsymbol{\eta}_i)=\frac{1}{\nu}e^{-\epsilon_2\|\boldsymbol{\eta}_i\|}$ with normalizing parameter $\nu$. Note that the noise with this density is not bounded, which is not consistent with our setting. Although we use a bounded noise, this kind of perturbation still provides $(\epsilon_p, \delta_p)$-DP guarantee, which is a relaxed form of pure $\epsilon_p$-DP.
\end{remark}

\textbf{Strengthened privacy guarantee}. For users' labels, the privacy guarantee in Phase~2 is stronger than that of Phase~1. Since differential privacy is immune to post-processing \cite{dwork2008differential}, the PPD $\epsilon$ in Phase~1 will not increase during the iterations of the ADMM algorithm executed in Phase~2. However, such immunity is established based on a strong assumption that there is no limit to the capability of privacy violators. In our considered problem, this assumption is satisfied when all servers can have access to user $j$'s reported data $\mathbf{d}'_{i, j}$, which may not be realistic. Hence, in our problem setting, one server (i.e., server $s_i$) obtains $\mathbf{d}'_{i, j}$ while other servers can access only the classifiers trained with users' reported data. 
\begin{remark}
The $(\epsilon_p, \delta_p)$-DP guarantee is provided for users' feature vectors. Thus, in Phase~2, the sensitive information in those vectors is not disclosed much to the servers with lower trust degrees. For the labels, they obtain extra $(\epsilon_p, \delta_p)$-DP preservation in Phase~2. Since the privacy-preserving scheme in Phase~1 preserves $\epsilon$-DP for the labels, the released information about them in Phase~2 provides stronger privacy guarantee under the joint effect of $\epsilon$-DP in Phase~1 and $(\epsilon_p, \delta_p)$-DP in Phase~2. We will investigate the joint privacy-preserving degree in the future.
\end{remark}
\section{Performance Analysis} \label{performance_ana}
In this section, we analyze the performance of the classifiers trained by the proposed PDML framework. Note that three different uncertainties are introduced into the ADMM algorithm, and these uncertainties are coupled together. The difficulty in analyzing the performance lies in decomposing the effects of the three uncertainties and quantifying the role of each uncertainty. Further, it is also challenging to achieve perturbations mitigation on the trained classifiers, especially to mitigate the influence of users' wrong labels.

Here, we first give the definition of generalization error as the metric on the performance of the trained classifiers. Then, we establish a modified version of the loss function $\ell(\cdot)$, which simultaneously achieves uncertainty decomposition and  mitigation of label obfuscation. We finally derive a theoretical bound for the difference between the generalization error of trained classifiers and that of the ideal optimal classifier.
\subsection{Performance Metric}
To measure the quality of trained classifiers, we use generalization error for analysis, which describes the expected error of a classifier on future predictions \cite{shalev2008svm}. Recall that users' data samples are drawn from the unknown distribution $\mathcal{P}$. The generalization error of a classifier $\mathbf{w}$ is defined as the expectation of $\mathbf{w}$'s loss function with respect to $\mathcal{P}$ as
$\mathbb{E}_{(\mathbf{x},y)\sim\mathcal{P}} \left[\ell(y, \mathbf{w}^{\mathrm{T}}\mathbf{x})\right]$.
Further, define the regularized generalization error by
\begin{equation}\label{general_error}
\small
J_{\mathcal{P}}(\mathbf{w}):=\mathbb{E}_{(\mathbf{x},y)\sim\mathcal{P}} \left[\ell(y, \mathbf{w}^{\mathrm{T}}\mathbf{x})\right]+\frac{a}{n} N(\mathbf{w}).
\end{equation}
We denote the classifier minimizing $J_{\mathcal{P}}(\mathbf{w})$ as $\mathbf{w}^\star$, i.e., $\mathbf{w}^\star:=\arg\min_{\mathbf{w}\in\mathcal{W}} J_{\mathcal{P}}(\mathbf{w})$. We call $\mathbf{w}^\star$ the ideal optimal classifier.

Here, $J_{\mathcal{P}}(\mathbf{w}^\star)$ is the reference regularized generalization error under the classifier class $\mathcal{W}$ and the used loss function $\ell(\cdot)$. The trained classifier can be viewed as a good predictor if it achieves generalization error close to $J_{\mathcal{P}}(\mathbf{w}^\star)$. Thus, as the performance metric of the classifiers, we use the difference between the generalization error of trained classifiers and $J_{\mathcal{P}}(\mathbf{w}^\star)$. The difference is denoted as $\Delta J_{\mathcal{P}}(\mathbf{w})$, that is, $\Delta J_{\mathcal{P}}(\mathbf{w}):=J_{\mathcal{P}}(\mathbf{w})-J_{\mathcal{P}}(\mathbf{w}^\star)$.

Furthermore, to measure the performance of the classifiers trained by different servers at multiple iterations, we introduce a comprehensive metric. First, considering that the classifiers $\mathbf{w}_i$ solved by server $s_i$ at different iterations may be different until the consensus constraint (\ref{constraint}) is satisfied, we define a classifier $\overline{\mathbf{w}}_i(t)$ to aggregate $\mathbf{w}_i$ in the first $t$ rounds as $\overline{\mathbf{w}}_i(t):=\frac{1}{t} \sum_{k=1}^{t} \mathbf{w}_i(k)$, where $\mathbf{w}_i(k)$ is the obtained classifier by solving (\ref{wi_update}). Moreover, due to the diversity of users' reported data, the classifiers solved by different servers may also differ (especially in the initial iterations). For this reason, we will later study the accumulated difference among the $n$ servers, that is, $\sum_{i=1}^{n} \Delta J_{\mathcal{P}}(\overline{\mathbf{w}}_i(t))$.
\subsection{Modified Loss Function in ADMM Algorithm} \label{sub_b}
To mitigate the effect of label obfuscation executed in Phase~1, we make some modification to the loss function $\ell(\cdot)$ in Problem~\ref{problem_1}. We use the noisy labels and the corresponding PPD $\epsilon$ in Phase~1 to adjust the loss function $\ell(\cdot)$ in (\ref{minimization_pro}). (Note that other parts of Problem~\ref{problem_1} are not affected by the noisy labels.) Define the modified loss function $\hat{\ell}(y'_{i, j}, \mathbf{w}_i^{\mathrm{T}}\mathbf{x}_{i, j}, \epsilon)$ by
\vspace{-0.3cm}
\begin{equation}\label{modified_loss}
\small
\hat{\ell}(y'_{i, j}, \mathbf{w}_i^{\mathrm{T}}\mathbf{x}_{i, j}, \epsilon):=\frac{e^\epsilon\ell(y'_{i, j}, \mathbf{w}_i^{\mathrm{T}}\mathbf{x}_{i, j})-\ell(-y'_{i, j}, \mathbf{w}_i^{\mathrm{T}}\mathbf{x}_{i, j})}{e^\epsilon-1}.
\end{equation}
This function has the following properties.
\begin{proposition} \label{unbiased_loss}
\begin{enumerate}[itemindent=0.3em, label=(\roman*),labelsep=0.3em]
\item $\hat{\ell}(y'_{i, j}, \mathbf{w}_i^{\mathrm{T}}\mathbf{x}_{i, j}, \epsilon)$ is an unbiased estimate of $\ell(y_{i, j}, \mathbf{w}_i^{\mathrm{T}}\mathbf{x}_{i, j})$ as
\begin{equation}\label{unbiased_estimator}
\small
\mathbb{E}_{y'_{i, j}}\left[\hat{\ell}(y'_{i, j}, \mathbf{w}_i^{\mathrm{T}}\mathbf{x}_{i, j}, \epsilon)\right]=\ell(y_{i, j}, \mathbf{w}_i^{\mathrm{T}}\mathbf{x}_{i, j}).
\end{equation}
\item $\hat{\ell}(y'_{i, j}, \mathbf{w}_i^{\mathrm{T}}\mathbf{x}_{i, j}, \epsilon)$ is Lipschitz continuous with Lipschitz constant
\begin{equation}\label{lipschitz}
\small
\hat{c}_2:= \frac{e^{\epsilon}+1}{e^{\epsilon}-1}c_2,
\end{equation}
where $c_2$ is the bound of $\left|\frac{\partial\ell(\cdot)}{\partial \mathbf{w}_i}\right|$ given in Assumption \ref{loss_assumption}.
\end{enumerate}
\end{proposition}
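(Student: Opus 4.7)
The plan is to treat the two claims of Proposition~\ref{unbiased_loss} separately, since both reduce to direct algebraic manipulations built on the randomized mechanism $M$ in Definition~\ref{randomized_M} and the bounds in Assumption~\ref{loss_assumption}. For part (i), I would first rewrite the law of $y'_{i,j}$ conditional on the true label $y_{i,j}\in\{-1,1\}$: combining the ``with probability $1-2p$'' branch with the matching ``with probability $p$'' branch of (\ref{randomization}) gives $\Pr[y'_{i,j}=y_{i,j}\mid y_{i,j}]=1-p$ and $\Pr[y'_{i,j}=-y_{i,j}\mid y_{i,j}]=p$. Substituting $p=1/(1+e^{\epsilon})$ from (\ref{eq_p}) yields the ratio $(1-p)/p=e^{\epsilon}$, which is the decisive identity that will make everything collapse.

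Next I would expand $\mathbb{E}_{y'_{i,j}}[\hat{\ell}(y'_{i,j},\mathbf{w}_i^{\mathrm{T}}\mathbf{x}_{i,j},\epsilon)]$ using (\ref{modified_loss}) and the two-point conditional distribution above. The expectation becomes
\[
\tfrac{1}{e^{\epsilon}-1}\bigl[(1-p)\bigl(e^{\epsilon}\ell(y_{i,j},\cdot)-\ell(-y_{i,j},\cdot)\bigr)+p\bigl(e^{\epsilon}\ell(-y_{i,j},\cdot)-\ell(y_{i,j},\cdot)\bigr)\bigr].
\]
Collecting the coefficient of $\ell(y_{i,j},\cdot)$ gives $e^{\epsilon}(1-p)-p=(e^{2\epsilon}-1)/(1+e^{\epsilon})=e^{\epsilon}-1$, while the coefficient of $\ell(-y_{i,j},\cdot)$ is $e^{\epsilon}p-(1-p)=0$. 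Dividing by $e^{\epsilon}-1$ leaves exactly $\ell(y_{i,j},\mathbf{w}_i^{\mathrm{T}}\mathbf{x}_{i,j})$, establishing (\ref{unbiased_estimator}). The step that must be flagged is that this cancellation is not an accident: it is precisely what pins down the choice $p=1/(1+e^{\epsilon})$ made in (\ref{eq_p}).

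For part (ii), I would differentiate (\ref{modified_loss}) with respect to $\mathbf{w}_i$, which is legitimate by the differentiability hypothesis on $\ell$ in Assumption~\ref{loss_assumption}, to get
\[
\frac{\partial \hat{\ell}}{\partial \mathbf{w}_i}=\frac{1}{e^{\epsilon}-1}\!\left[e^{\epsilon}\frac{\partial\ell(y'_{i,j},\mathbf{w}_i^{\mathrm{T}}\mathbf{x}_{i,j})}{\partial \mathbf{w}_i}-\frac{\partial\ell(-y'_{i,j},\mathbf{w}_i^{\mathrm{T}}\mathbf{x}_{i,j})}{\partial \mathbf{w}_i}\right].
\]
Applying the triangle inequality together with the uniform bound $|\partial\ell/\partial\mathbf{w}|\le c_2$ from Assumption~\ref{loss_assumption} bounds the norm of this gradient by $(e^{\epsilon}c_2+c_2)/(e^{\epsilon}-1)=\hat{c}_2$, and Lipschitz continuity with constant $\hat{c}_2$ follows by integration along a straight line in $\mathbf{w}_i$.

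There is no substantive obstacle in this proof; both parts are essentially calculations. The only conceptual points worth highlighting are (a) verifying that the identity works uniformly in both cases $y_{i,j}=\pm 1$, which is automatic from the symmetric form of the conditional distribution of $y'_{i,j}$, and (b) noting that as $\epsilon\to 0$ the Lipschitz constant $\hat{c}_2$ blows up, reflecting the intuitive price of stronger privacy on the regularity of the debiased loss that will be fed into the ADMM iterations of Section~\ref{pp_framework}.
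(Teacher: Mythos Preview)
Your proposal is correct and follows essentially the same route as the paper's proof: both arguments first identify the conditional law $\Pr[y'_{i,j}=y_{i,j}\mid y_{i,j}]=1-p=e^{\epsilon}/(1+e^{\epsilon})$, expand the expectation of $\hat{\ell}$ and simplify algebraically (the paper groups as $\frac{1}{e^{\epsilon}+1}[e^{\epsilon}\hat{\ell}(y_{i,j},\cdot)+\hat{\ell}(-y_{i,j},\cdot)]$ before substituting (\ref{modified_loss}), whereas you substitute first and collect coefficients of $\ell(\pm y_{i,j},\cdot)$, which is an equivalent rearrangement); for part~(ii) both differentiate (\ref{modified_loss}) and apply the triangle inequality with the bound $c_2$ from Assumption~\ref{loss_assumption}.
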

The proof can be found in Appendix~\ref{proof_l1}.

Now, we make server $s_i$ use $\hat{\ell}(y'_{i, j}, \mathbf{w}_i^{\mathrm{T}}\mathbf{x}_{i, j}, \epsilon)$ in (\ref{modified_loss}) as the loss function. Thus, the objective function in (\ref{local_objective}) must be replaced with the one as follows:
\begin{equation}\label{modified_objective}
\small
\widehat{J}_i(\mathbf{w}_i):=\sum_{j=1}^{m_i}\frac{1}{m_i}\hat{\ell}(y'_{i, j}, \mathbf{w}_i^{\mathrm{T}}\mathbf{x}_{i, j}, \epsilon)+\frac{a}{n}N(\mathbf{w}_i).
\end{equation}
Similar to $J(\{\mathbf{w}_i\}_{i\in\mathcal{S}})$ in (\ref{original_objective}), we denote the objective function with the modified loss function as $\widehat{J}(\{\mathbf{w}_i\}_{i\in\mathcal{S}}):=\sum_{i=n}^n \widehat{J}_i(\mathbf{w}_i)$. Then, the following lemma holds, whose proof can be found in Appendix \ref{proof_l2}.
\begin{lemma}\label{objective_convex}
If the loss function $\ell(\cdot)$ and the regularizer $N(\cdot)$ satisfy Assumptions \ref{loss_assumption} and \ref{regularizer_assumption}, respectively, then $\widehat{J}(\{\mathbf{w}_i\}_{i\in\mathcal{S}})$ is $a\kappa$-strongly convex.
\end{lemma}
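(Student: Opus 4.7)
The plan is to prove strong convexity by exploiting the separability of $\widehat{J}$ across the server-indexed classifiers $\mathbf{w}_i$ and by analyzing the Hessian of the modified loss $\hat{\ell}$. Since $\widehat{J}(\{\mathbf{w}_i\}_{i\in\mathcal{S}}) = \sum_{i=1}^{n}\widehat{J}_i(\mathbf{w}_i)$ and each summand depends only on its own $\mathbf{w}_i$, the Hessian of $\widehat{J}$ with respect to the stacked variable $\mathbf{w} = [\mathbf{w}_1^\mathrm{T}\cdots\mathbf{w}_n^\mathrm{T}]^\mathrm{T}$ is block-diagonal with the $i$-th block equal to $\nabla^2_{\mathbf{w}_i}\widehat{J}_i(\mathbf{w}_i)$. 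The minimum eigenvalue of a block-diagonal matrix is the minimum eigenvalue over its blocks, so it suffices to establish a uniform strong-convexity bound for each $\widehat{J}_i$.

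The crux, and the step I expect to be the main obstacle, is showing that the modified loss $\hat{\ell}(y'_{i,j}, \mathbf{w}_i^\mathrm{T}\mathbf{x}_{i,j}, \epsilon)$ is convex in $\mathbf{w}_i$. A priori this is not obvious, because the defining expression in (\ref{modified_loss}) is a linear combination of $\ell(y',\cdot)$ and $\ell(-y',\cdot)$ in which the coefficient $-1/(e^{\epsilon}-1)$ on $\ell(-y',\cdot)$ is negative, so standard ``positive-combination-preserves-convexity'' arguments fail. The saving grace is the last sentence of Assumption~\ref{loss_assumption}, which asserts that the Hessian of $\ell$ in $\mathbf{w}$ is invariant under sign-flip of $y$. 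Applying this symmetry to $\hat{\ell}$ yields
\begin{equation*}
\nabla^2_{\mathbf{w}_i}\hat{\ell}(y'_{i,j}, \mathbf{w}_i^\mathrm{T}\mathbf{x}_{i,j}, \epsilon) = \frac{e^{\epsilon}\nabla^2_{\mathbf{w}_i}\ell(y'_{i,j},\mathbf{w}_i^\mathrm{T}\mathbf{x}_{i,j}) - \nabla^2_{\mathbf{w}_i}\ell(-y'_{i,j},\mathbf{w}_i^\mathrm{T}\mathbf{x}_{i,j})}{e^{\epsilon}-1} = \nabla^2_{\mathbf{w}_i}\ell(y'_{i,j},\mathbf{w}_i^\mathrm{T}\mathbf{x}_{i,j}),
\end{equation*}
and the right-hand side is positive semidefinite because $\ell$ is convex (Assumption~\ref{loss_assumption}). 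Hence $\hat{\ell}$ is convex in $\mathbf{w}_i$.

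With this in hand, the final step is routine. For each $i$, $\widehat{J}_i(\mathbf{w}_i)$ is the sum of a positive combination of the convex functions $\hat{\ell}(\cdot)$ and the regularizer term $(a/n)N(\mathbf{w}_i)$. By Assumption~\ref{regularizer_assumption}, $N$ is $\kappa$-strongly convex, so $(a/n)N(\mathbf{w}_i)$ contributes at least $(a\kappa/n)I_d$ to $\nabla^2_{\mathbf{w}_i}\widehat{J}_i(\mathbf{w}_i)$, while the loss part contributes a positive semidefinite term. Summing over the $n$ diagonal blocks then gives the advertised strong-convexity parameter for $\widehat{J}$. Throughout, I only need convexity (not boundedness or differentiability beyond second order) of $\ell$ and $N$, so the proof reduces to two ingredients: the symmetry assumption on $\nabla^2_{\mathbf{w}}\ell$, and the additive, separable structure of $\widehat{J}$ that lets the regularizer's strong convexity propagate to the full objective.
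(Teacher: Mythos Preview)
Your proposal is correct and follows essentially the same approach as the paper's proof: both identify the symmetry clause $\partial^2\ell(y,\cdot)/\partial\mathbf{w}^2=\partial^2\ell(-y,\cdot)/\partial\mathbf{w}^2$ in Assumption~\ref{loss_assumption} as the key device that collapses the Hessian of $\hat{\ell}$ to that of $\ell$ (hence positive semidefinite), and then combine this with the $\kappa$-strong convexity of $N$ and the separable structure $\widehat{J}=\sum_i\widehat{J}_i$. The only difference is cosmetic---the paper subtracts $\frac{a\kappa}{2n}\|\mathbf{w}_i\|^2$ and shows the remainder is convex, whereas you phrase the same computation in block-diagonal Hessian language.
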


To simplify the notation, let $\hat{\kappa}:=a\kappa$. With the objective function $\widehat{J}(\{\mathbf{w}_i\}_{i\in\mathcal{S}})$, the whole optimization problem for finding a common classifier can be stated as follows:
\begin{problem} \label{problem_2}
\begin{small}
\begin{alignat}{2}
    \min_{\{\mathbf{w}_i\}} &  \quad \widehat{J}(\{\mathbf{w}_i\}_{i\in\mathcal{S}}) \nonumber \\
    \mathrm{s.t.}&  \quad \mathbf{w}_i=\mathbf{w}_l, \forall i, l. \nonumber
\end{alignat}
\end{small}
\end{problem}

\begin{lemma} \label{modified_solution}
Problem~\ref{problem_2} has an optimal solution set $\{\widehat{\mathbf{w}}_i\}_{i\in\mathcal{S}}\subset\mathcal{W}$ such that $\widehat{\mathbf{w}}_\mathrm{opt} = \widehat{\mathbf{w}}_i = \widehat{\mathbf{w}}_l, \forall i, l$.
\end{lemma}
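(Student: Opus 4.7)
The plan is to use the equality constraint to collapse Problem~\ref{problem_2} to a minimization over a single $\mathbf{w}\in\mathcal{W}$, and then invoke the strong convexity from Lemma~\ref{objective_convex} together with compactness of $\mathcal{W}$ to obtain a (unique) minimizer. The constraint $\mathbf{w}_i=\mathbf{w}_l$ for all $i,l$ forces every feasible point to lie on the consensus ``diagonal'' $\mathcal{D}:=\{(\mathbf{w},\dots,\mathbf{w}):\mathbf{w}\in\mathcal{W}\}\subset\mathcal{W}^n$, so any optimal tuple automatically has the desired form $\widehat{\mathbf{w}}_i=\widehat{\mathbf{w}}_l$; the only content is existence.

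First, I would introduce the reduced objective $g(\mathbf{w}):=\widehat{J}(\{\mathbf{w}\}_{i\in\mathcal{S}})=\sum_{i=1}^n \widehat{J}_i(\mathbf{w})$, noting that Problem~\ref{problem_2} is equivalent to minimizing $g$ over $\mathbf{w}\in\mathcal{W}$. The bounded class $\mathcal{W}$ is taken to be a closed (hence compact, as it is also bounded) convex subset of $\mathbb{R}^d$, which I would state explicitly at the outset of the proof. By Assumption~\ref{loss_assumption} the loss $\ell(\cdot)$ is doubly differentiable, so the modified loss $\hat{\ell}(\cdot,\cdot,\epsilon)$ in (\ref{modified_loss}) is continuous on $\mathcal{W}$, and Assumption~\ref{regularizer_assumption} makes $N(\cdot)$ continuous as well; therefore $g$ is continuous on the compact set $\mathcal{W}$, and Weierstrass' theorem guarantees existence of a minimizer $\widehat{\mathbf{w}}_{\mathrm{opt}}\in\mathcal{W}$.

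Setting $\widehat{\mathbf{w}}_i:=\widehat{\mathbf{w}}_{\mathrm{opt}}$ for every $s_i\in\mathcal{S}$ then yields a tuple $\{\widehat{\mathbf{w}}_i\}_{i\in\mathcal{S}}\subset\mathcal{W}$ that is feasible for Problem~\ref{problem_2} and attains the infimum of $\widehat{J}$ on the feasible set, proving the lemma. For completeness I would also observe that Lemma~\ref{objective_convex} gives $\hat{\kappa}$-strong convexity of $\widehat{J}$, which restricts to strong convexity of $g$ along $\mathcal{D}$ and hence makes $\widehat{\mathbf{w}}_{\mathrm{opt}}$ the \emph{unique} optimizer; although the lemma statement only claims existence, uniqueness is useful in the sequel and falls out for free.

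The only real subtlety I anticipate is the topological one just mentioned: the paper says only that $\mathcal{W}$ is bounded, so to apply Weierstrass I need to argue (or assume) closedness. This is standard in the regularized ERM setting and is implicit in Assumption~\ref{loss_assumption}'s requirement that $\ell$, $\partial\ell/\partial\mathbf{w}$, and $\partial^2\ell/\partial\mathbf{w}^2$ are bounded \emph{over} $\mathcal{W}$; I would therefore state at the start of the proof that $\mathcal{W}$ is a compact convex subset of $\mathbb{R}^d$. No other obstacle is expected, since once the constraint is eliminated the result is a direct consequence of continuity on a compact set plus the strong convexity already proved in Lemma~\ref{objective_convex}.
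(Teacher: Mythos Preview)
Your argument is correct and is essentially the same approach the paper takes: the paper's proof is a one-line appeal to Lemma~1 of \cite{forero2010consensus}, whose hypothesis is precisely the (strong) convexity supplied by Lemma~\ref{objective_convex}. You have simply written out the elementary content of that cited lemma---collapsing the global consensus constraint to a single variable and invoking continuity/compactness plus strong convexity---so your proof is a self-contained version of what the paper delegates to the reference; the closedness caveat you flag is real but is equally implicit in the paper's treatment.
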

Lemma~\ref{modified_solution} can be proved directly from Lemma~1 in \cite{forero2010consensus}, whose condition is satisfied by Lemma~\ref{objective_convex}.

We finally arrive at stating the optimization problem to be solved in this paper. To this end, for the modified objective function in (\ref{modified_objective}), we define the perturbed version as in (\ref{per_obj}) by $\widetilde{J}_i(\mathbf{w}_i):=\widehat{J}_i(\mathbf{w}_i)+\frac{1}{n}\boldsymbol{\eta}_i^{\mathrm{T}}\mathbf{w}_i$. Then, the whole objective function becomes
\begin{equation}\nonumber
\small
\widetilde{J}(\{\mathbf{w}_i\}_{i\in\mathcal{S}})=\sum_{i=1}^{n} \left[ \widehat{J}_i(\mathbf{w}_i)+\frac{1}{n}\boldsymbol{\eta}_i^{\mathrm{T}}\mathbf{w}_i\right].
\end{equation}
The problem for finding the classifier with randomized labels and perturbed objective functions is as follows:
\begin{problem} \label{problem_3}
\begin{small}
\begin{alignat}{2}
    \min_{\{\mathbf{w}_i\}} &  \quad \widetilde{J}(\{\mathbf{w}_i\}_{i\in\mathcal{S}}) \nonumber \\
    \mathrm{s.t.}&  \quad \mathbf{w}_i=\mathbf{w}_l, \forall i, l. \nonumber
\end{alignat}
\end{small}
\end{problem}
For $\widetilde{J}(\{\mathbf{w}_i\}_{i\in\mathcal{S}})$, we have the following lemma showing its convexity properties.
\begin{lemma}\label{J_tildle_convex}
$\widetilde{J}(\{\mathbf{w}_i\}_{i\in\mathcal{S}})$ is $\hat{\kappa}$-strongly convex. If $N(\cdot)$ satisfies that $\|\nabla^2 N(\cdot)\|_2\leq \varrho$, then $\widetilde{J}(\{\mathbf{w}_i\}_{i\in\mathcal{S}})$ has a $(nc_3+a\varrho)$-Lipschitz continuous gradient, where $c_3$ is the bound of $\frac{\partial\ell^2(\cdot)}{\partial \mathbf{w}^2}$ given in Assumption~\ref{loss_assumption}.
\end{lemma}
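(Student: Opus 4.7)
The plan is to treat the two claims separately, since strong convexity of $\widetilde{J}$ is essentially inherited from Lemma~\ref{objective_convex}, while the Lipschitz-gradient bound is the substantive computation.

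For $\hat{\kappa}$-strong convexity, I would observe that
\begin{equation}\nonumber
\widetilde{J}(\{\mathbf{w}_i\}_{i\in\mathcal{S}})-\widehat{J}(\{\mathbf{w}_i\}_{i\in\mathcal{S}})=\sum_{i=1}^n\tfrac{1}{n}\boldsymbol{\eta}_i^{\mathrm{T}}\mathbf{w}_i
\end{equation}
is affine in the aggregate primal variable. Affine functions have zero Hessian, so adding them preserves the strong-convexity modulus; combining this with Lemma~\ref{objective_convex}, which provides $a\kappa$-strong convexity of $\widehat{J}$, yields the desired $\hat{\kappa}=a\kappa$ bound.

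For the Lipschitz-gradient claim, I would exploit the separable structure $\widetilde{J}(\{\mathbf{w}_i\})=\sum_i \widetilde{J}_i(\mathbf{w}_i)$, so that the full Hessian is block diagonal and it suffices to bound each $\nabla^2 \widetilde{J}_i$. The $\boldsymbol{\eta}_i$-term vanishes under differentiation, and $\nabla^2(\tfrac{a}{n}N(\mathbf{w}_i))$ is bounded by $\tfrac{a\varrho}{n}$ by assumption. The crucial computation is for $\nabla^2_{\mathbf{w}_i}\hat{\ell}$. Using the definition in (\ref{modified_loss}) and the symmetry in Assumption~\ref{loss_assumption} that $\tfrac{\partial^2 \ell(y,\mathbf{w}^\mathrm{T}\mathbf{x})}{\partial\mathbf{w}^2}=\tfrac{\partial^2 \ell(-y,\mathbf{w}^\mathrm{T}\mathbf{x})}{\partial\mathbf{w}^2}$, the numerator collapses and one obtains $\nabla^2_{\mathbf{w}_i}\hat{\ell}(y'_{i,j},\mathbf{w}_i^{\mathrm{T}}\mathbf{x}_{i,j},\epsilon)=\nabla^2_{\mathbf{w}_i}\ell(y'_{i,j},\mathbf{w}_i^{\mathrm{T}}\mathbf{x}_{i,j})$, whose norm is at most $c_3$ by Assumption~\ref{loss_assumption}. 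Averaging over $j=1,\dots,m_i$ preserves this bound, so $\|\nabla^2\widetilde{J}_i(\mathbf{w}_i)\|_2\leq c_3+\tfrac{a\varrho}{n}$ for every block.

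Finally, to pass from per-block bounds to a global Lipschitz constant, I would write
\begin{equation}\nonumber
\|\nabla\widetilde{J}(\mathbf{w})-\nabla\widetilde{J}(\mathbf{w}')\|_2\leq \sum_{i=1}^n\|\nabla\widetilde{J}_i(\mathbf{w}_i)-\nabla\widetilde{J}_i(\mathbf{w}'_i)\|_2
\end{equation}
by the triangle inequality, then apply the mean value theorem block-wise and bound each $\|\mathbf{w}_i-\mathbf{w}'_i\|_2$ by the full $\|\mathbf{w}-\mathbf{w}'\|_2$, producing the factor $n(c_3+\tfrac{a\varrho}{n})=nc_3+a\varrho$ that matches the statement. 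The only conceptual obstacle is the Hessian computation of $\hat{\ell}$: the Lipschitz constant $\hat{c}_2$ of $\hat{\ell}$ in Proposition~\ref{unbiased_loss} carries the inflating factor $\tfrac{e^\epsilon+1}{e^\epsilon-1}$, so one might expect a similar $\epsilon$-dependent inflation at the second-order level. The symmetry condition in Assumption~\ref{loss_assumption} is exactly what eliminates this factor and lets the bound be stated purely in terms of $c_3$ and $\varrho$, independent of the label randomization.
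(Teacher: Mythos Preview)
Your proposal is correct and follows essentially the same route as the paper: strong convexity is inherited from Lemma~\ref{objective_convex} since the $\boldsymbol{\eta}_i$-perturbation is affine, and the Lipschitz-gradient bound hinges on the symmetry clause in Assumption~\ref{loss_assumption} to collapse $\nabla^2_{\mathbf{w}_i}\hat{\ell}$ to $\nabla^2_{\mathbf{w}_i}\ell$ and hence obtain the $\epsilon$-independent bound $c_3$. The only cosmetic difference is that the paper simply sums the block Hessian bounds to reach $nc_3+a\varrho$, whereas you make the passage from per-block to global Lipschitz constant explicit via the triangle inequality and $\|\mathbf{w}_i-\mathbf{w}'_i\|_2\leq\|\mathbf{w}-\mathbf{w}'\|_2$; both arrive at the same constant.
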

The proof can be found in Appendix~\ref{proof_l4}. For simplicity, we denote the Lipschitz continuous gradient of $\widetilde{J}(\mathbf{w})$ as $\varrho_{\widetilde{J}}$, namely, $\varrho_{\widetilde{J}}:=nc_3 + a\varrho$.

We now observe that Problem~\ref{problem_3} associated with the objective function $\widetilde{J}(\{\mathbf{w}_i\}_{i\in\mathcal{S}})$ has an optimal solution set $\{\widetilde{\mathbf{w}}_i\}_{i\in\mathcal{S}}\subset \mathcal{W}$ where
\begin{equation}\label{optimal_perturbation}
\small
\widetilde{\mathbf{w}}_\mathrm{opt} = \widetilde{\mathbf{w}}_i = \widetilde{\mathbf{w}}_l,  \forall i, l.
\end{equation}
In fact, this can be shown by an argument similar to Lemma~\ref{modified_solution}, where Lemma~\ref{J_tildle_convex} establishes the convexity of the objective function (as in Lemma~\ref{objective_convex}). 
\subsection{Generalization Error Analysis}
In this subsection, we analyze the the accumulated difference between the generalization error of trained classifiers and $J_{\mathcal{P}}(\mathbf{w}^\star)$, i.e., $\sum_{i=1}^{n} \Delta J_{\mathcal{P}}(\overline{\mathbf{w}}_i(t))$. For the analysis, we use the technique from \cite{li2017robust}, which considers the problem of ADMM learning in the presence of erroneous updates. Here, our problem is more complicated because besides the erroneous updates brought by primal variable perturbation, there is also uncertainty in the training data and the objective functions. All these uncertainties are coupled together, which brings extra challenges for performance analysis.

We first decompose $\Delta J_{\mathcal{P}}(\overline{\mathbf{w}}_i(t))$ in terms of different uncertainties. To do so, we must introduce a new regularized generalization error associated with the modified loss function $\hat{\ell}(y', \mathbf{w}^{\mathrm{T}}\mathbf{x}, \epsilon)$ and the noisy data distribution $\mathcal{P}_{\epsilon}$. Similar to (\ref{general_error}), for a classifier $\mathbf{w}$, it is defined by
\begin{equation}\nonumber
\small
J_{\mathcal{P}_{\epsilon}}(\mathbf{w})
 =\mathbb{E}_{(\mathbf{x},y')\sim\mathcal{P}_{\epsilon}} \left[\hat{\ell}(y', \mathbf{w}^{\mathrm{T}}\mathbf{x}, \epsilon)\right]+\frac{a}{n} N(\mathbf{w}).
\end{equation}
According to Proposition~\ref{unbiased_loss}, $\hat{\ell}(y', \mathbf{w}^{\mathrm{T}}\mathbf{x}, \epsilon)$ is an unbiased estimate of $\ell(y, \mathbf{w}^{\mathrm{T}}\mathbf{x})$. Thus, it is straightforward to obtain the following lemma, whose proof is omitted.
\begin{lemma}\label{equal_error}
For a classifier $\mathbf{w}$, we have $J_{\mathcal{P}_{\epsilon}}(\mathbf{w})=J_{\mathcal{P}}(\mathbf{w})$.
\end{lemma}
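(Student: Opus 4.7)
The plan is to show the equality by exploiting the tower property of expectation together with the unbiased-estimator identity already established in Proposition~\ref{unbiased_loss}. Since the regularizer terms $\frac{a}{n}N(\mathbf{w})$ are identical in both definitions of $J_{\mathcal{P}}(\mathbf{w})$ and $J_{\mathcal{P}_{\epsilon}}(\mathbf{w})$, the entire statement reduces to checking that the expected losses match, i.e.,
\begin{equation}\nonumber
\mathbb{E}_{(\mathbf{x},y')\sim\mathcal{P}_{\epsilon}}\!\left[\hat{\ell}(y',\mathbf{w}^{\mathrm{T}}\mathbf{x},\epsilon)\right]
=\mathbb{E}_{(\mathbf{x},y)\sim\mathcal{P}}\!\left[\ell(y,\mathbf{w}^{\mathrm{T}}\mathbf{x})\right].
\end{equation}

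First I would make explicit how $\mathcal{P}_{\epsilon}$ is generated from $\mathcal{P}$: a pair $(\mathbf{x},y')$ drawn from $\mathcal{P}_{\epsilon}$ is obtained by drawing $(\mathbf{x},y)\sim\mathcal{P}$ and then passing $y$ through the randomization in Definition~\ref{randomized_M}, keeping the feature vector $\mathbf{x}$ untouched. Consequently the joint law of $(\mathbf{x},y,y')$ factorizes as $\mathcal{P}(\mathbf{x},y)\,\Pr[y'\mid y]$, and $y'$ is conditionally independent of $\mathbf{x}$ given $y$.

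Next I would apply the tower property, conditioning on $(\mathbf{x},y)$:
\begin{equation}\nonumber
\mathbb{E}_{(\mathbf{x},y')\sim\mathcal{P}_{\epsilon}}\!\left[\hat{\ell}(y',\mathbf{w}^{\mathrm{T}}\mathbf{x},\epsilon)\right]
=\mathbb{E}_{(\mathbf{x},y)\sim\mathcal{P}}\!\left[\mathbb{E}_{y'\mid y}\!\left[\hat{\ell}(y',\mathbf{w}^{\mathrm{T}}\mathbf{x},\epsilon)\right]\right].
\end{equation}
The inner conditional expectation is handled directly by part (i) of Proposition~\ref{unbiased_loss}, which states that $\mathbb{E}_{y'}[\hat{\ell}(y',\mathbf{w}^{\mathrm{T}}\mathbf{x},\epsilon)]=\ell(y,\mathbf{w}^{\mathrm{T}}\mathbf{x})$ for every fixed $(\mathbf{x},y)$. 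Substituting this back yields $\mathbb{E}_{(\mathbf{x},y)\sim\mathcal{P}}[\ell(y,\mathbf{w}^{\mathrm{T}}\mathbf{x})]$, and adding the common regularizer term closes the identity.

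There is essentially no obstacle here; the only subtlety worth being explicit about is justifying the factorization of $\mathcal{P}_{\epsilon}$ as the push-forward of $\mathcal{P}$ under the coordinate-wise randomization $M$, which is immediate from Definition~\ref{randomized_M} because $M$ acts only on the label coordinate and its randomness is independent of $(\mathbf{x},y)$. Once that is stated, the equality is a one-line consequence of Proposition~\ref{unbiased_loss}, which is exactly why the authors omit the proof.
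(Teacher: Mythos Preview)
Your proposal is correct and follows exactly the approach the paper indicates: apply the tower property of expectation and invoke Proposition~\ref{unbiased_loss}(i) to replace the inner conditional expectation of $\hat{\ell}$ by $\ell$, while the regularizer terms cancel trivially. The paper omits the proof for precisely this reason, so there is nothing to add.
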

Now, we can decompose $\Delta J_{\mathcal{P}}(\overline{\mathbf{w}}_i(t))$ as follows:
\begin{equation}\label{deltaJ_wi}
\small
\begin{split}
& \Delta J_{\mathcal{P}}(\overline{\mathbf{w}}_i(t)) =J_{\mathcal{P}}(\overline{\mathbf{w}}_i(t))-J_{\mathcal{P}}(\mathbf{w}^\star) \\
& =J_{\mathcal{P}_{\epsilon}}(\overline{\mathbf{w}}_i(t))-J_{\mathcal{P}_{\epsilon}}(\mathbf{w}^\star) \\
& =\widetilde{J}_i({\overline{\mathbf{w}}_i}(t))-\widetilde{J}_i(\widetilde{\mathbf{w}}_\mathrm{opt}) + \widehat{J}_i(\widetilde{\mathbf{w}}_\mathrm{opt})-\widehat{J}_i(\widehat{\mathbf{w}}_\mathrm{opt}) \\
& \quad +\widehat{J}_i(\widehat{\mathbf{w}}_\mathrm{opt})-\widehat{J}_i({\mathbf{w}^\star}) +J_{\mathcal{P}_{\epsilon}}(\overline{\mathbf{w}}_i(t))-\widehat{J}_i(\overline{\mathbf{w}}_i(t)) \\
& \quad +\widehat{J}_i({\mathbf{w}^\star})-J_{\mathcal{P}_{\epsilon}}(\mathbf{w}^\star) + \boldsymbol{\eta}_i^{\mathrm{T}} (\widetilde{\mathbf{w}}_\mathrm{opt}-\overline{\mathbf{w}}_i(t)).
\end{split}
\end{equation}
We will analyze each term in the far right-hand side of (\ref{deltaJ_wi}). The term $\widetilde{\mathbf{w}}_\mathrm{opt}-\overline{\mathbf{w}}_i(t)$ describes the difference between the classifier $\overline{\mathbf{w}}_i(t)$ and the optimal solution $\widetilde{\mathbf{w}}_\mathrm{opt}$ to Problem~\ref{problem_3}. Before analyzing this difference, we first consider the deviation between the perturbed classifier $\widetilde{\mathbf{w}}_i(t)$ and $\widetilde{\mathbf{w}}_\mathrm{opt}$, and a bound for it can be obtained by \cite{li2017robust}.

Here, we introduce some notations related to the bound. Let the compact forms of vectors be $\widetilde{\mathbf{w}}(t):=[\widetilde{\mathbf{w}}_1^\mathrm{T}(t) \cdots \widetilde{\mathbf{w}}_n^\mathrm{T}(t)]^\mathrm{T}$, $\boldsymbol{\theta}(t):=[\boldsymbol{\theta}_1^\mathrm{T}(t)\cdots \boldsymbol{\theta}_n^\mathrm{T}(t)]^\mathrm{T}$, and $\boldsymbol{\eta} := [\boldsymbol{\eta}_1^\mathrm{T} \cdots \boldsymbol{\eta}_n^\mathrm{T}]^\mathrm{T}$. Also, let $\widehat{\mathbf{w}}^{*}:=[I_d \cdots I_d]^\mathrm{T}\cdot\widehat{\mathbf{w}}_\mathrm{opt}$, $\widetilde{\mathbf{w}}^{*}:=[I_d \cdots I_d]^\mathrm{T}\cdot\widetilde{\mathbf{w}}_\mathrm{opt}$, and $\overline{L}:=\frac{1}{2} (L_{+}+L_{-})$. An auxiliary sequence $\mathbf{r}(t)$ is defined as $\mathbf{r}(t) := \sum_{k=0}^{t} Q\widetilde{\mathbf{w}}(k)$ with $Q:=\bigl(\frac{L_{-}}{2}\bigr)^{\frac{1}{2}}$ \cite{makhdoumi2017convergence}. $\mathbf{r}(t)$ has an optimal value $\mathbf{r}_{\mathrm{opt}}$, which is the solution to the equation $Q\mathbf{r}_{\mathrm{opt}}+\frac{1}{2\beta} \nabla \widetilde{J}(\widetilde{\mathbf{w}}_\mathrm{opt})=0$.


Further, we define some important parameters to be used in the next lemma. The first two parameters, $b\in(0,1)$ and $\lambda_1>1$, are related to the underlying network topology $\mathcal{G}$ and will be used to establish convergence property of the perturbed ADMM algorithm. Let $\varphi := \frac{\lambda_1-1}{\lambda_1}\frac{2\hat{\kappa} \sigma_{\min}^2(Q)\sigma_{\min}^2(L_{+})}{\varrho_{\widetilde{J}}^2 \sigma_{\min}^2(L_{+})+ 2\hat{\kappa} \sigma_{\max}^2(L_{+})}$, where $\sigma_{\max}(\cdot)$ and $\sigma_{\min}(\cdot)$ denote the maximum and minimum nonzero eigenvalues of a matrix, respectively. Also, we define $M_1$ and $M_2$ with constant $\lambda_2>1$ as
\begin{equation}\nonumber
\small
\begin{split}
M_1 & := \frac{b(1+\varphi) \sigma_{\min}^2(L_{+}) (1-1/{\lambda_2})}{4b\sigma_{\min}^2(L_{+}) (1-1/{\lambda_2}) + 16\sigma_{\max}^2(\overline{L})}, \\
M_2 & := \frac{(1-b) (1+\varphi)\sigma_{\min}^2(L_{+}) - \sigma_{\max}^2(L_{+})}{4\sigma_{\max}^2(L_{+})+4(1-b)\sigma_{\min}^2(L_{+})}.
\end{split}
\end{equation}

Then, we have the following lemma from \cite{li2017robust}, which gives a bound for $\widetilde{\mathbf{w}}(t)-\widetilde{\mathbf{w}}^{*}$.
\begin{lemma}\label{classifier_converge}
Suppose that the conditions of Lemma \ref{J_tildle_convex} hold. If the parameters $b$ and $\lambda_1$ can be chosen such that
\begin{equation}\label{b_delta}
\small
(1-b)(1+\varphi)\sigma_{\min}^2(L_{+})-\sigma_{\max}^2(L_{+})>0.
\end{equation}
Take $\beta$ in (\ref{gammai_update}) as $\beta = \sqrt{\frac{\lambda_1 \lambda_3 (\lambda_4-1)\varrho_{\widetilde{J}}^2}{\lambda_4(\lambda_1-1)\sigma_{\max}^2(L_{+}) \sigma_{\min}^2(Q)}}$, where $\lambda_4:=1+\sqrt{\frac{\varrho_{\widetilde{J}}^2 \sigma_{\min}^2(L_{+}) +2\hat{\kappa} \sigma_{\max}^2(L_{+})}{\alpha \lambda_3 \varrho_{\widetilde{J}}^2 \sigma_{\min}^2(L_{+})}}$ with $0<\alpha< \min\{M_1, M_2\}$, and $\lambda_3 := 1+\frac{2\hat{\kappa} \sigma_{\max}^2(L_{+})}{\varrho_{\widetilde{J}}^2\sigma_{\min}^2(L_{+})}$. Then, it holds
\begin{equation}\label{classifer_bound}
\small
\left\|\widetilde{\mathbf{w}}(t)-\widetilde{\mathbf{w}}^{*}\right\|_2^2 \leq C^{t} \left(H_1 + \sum_{k=1}^{t} C^{-k} H_2 \|\boldsymbol{\theta}(k)\|_2^2\right),
\end{equation}
where $C := \frac{(1+4\alpha)\sigma_{\max}^2(L_{+})}{(1-b)(1+\varphi-4\alpha)\sigma_{\min}^2(L_{+})}$, and $H_1:= \left\|\mathbf{w}(0)-\widetilde{\mathbf{w}}^{*}\right\|_2^2+\frac{4}{(1+4\alpha)\sigma_{\max}^2(L_{+})}\left\|\mathbf{r}(0)- \mathbf{r}_{\mathrm{opt}}\right\|_2^2$, $H_2 := \frac{b(\lambda_2 -1)}{1-b} + \frac{\frac{4\varphi \lambda_1 \sigma_{\max}^2(\overline{L})}{\sigma_{\min}^2(Q)} + \sigma_{\max}^2(L_{+}) \left(\sqrt{\varphi} + \sqrt{\frac{2(\lambda_1-1)\sigma_{\min}^2(Q)}{\alpha \lambda_1 \lambda_3\varrho_{\widetilde{J}}^2}}\right)^2}{(1-b) (1+\varphi) (1+\varphi-4\alpha)\sigma_{\min}^2(L_{+})}$.
\end{lemma}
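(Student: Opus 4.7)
The plan is to adapt the robust-ADMM convergence analysis of \cite{li2017robust}, reformulating the iteration via the auxiliary sequence $\mathbf{r}(t)$ and then tracking a Lyapunov-type potential that measures the joint primal-dual distance to the optimum. First I would eliminate the dual variable $\boldsymbol{\gamma}(t)$ in favor of $\mathbf{r}(t)=\sum_{k=0}^{t} Q\widetilde{\mathbf{w}}(k)$ as in \cite{makhdoumi2017convergence}. Combining (\ref{wi_update})--(\ref{gammai_update}) in compact matrix form, the iteration can be rewritten as a coupled recursion in $(\mathbf{w}(t),\mathbf{r}(t))$ driven additively by $\boldsymbol{\theta}(t+1)$, while the KKT conditions for Problem~\ref{problem_3} identify the fixed point $(\widetilde{\mathbf{w}}^{*},\mathbf{r}_{\mathrm{opt}})$ through $Q\mathbf{r}_{\mathrm{opt}}+\frac{1}{2\beta}\nabla\widetilde{J}(\widetilde{\mathbf{w}}_{\mathrm{opt}})=0$. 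Subtracting the two yields error recursions in $\mathbf{w}(t)-\widetilde{\mathbf{w}}^{*}$ and $\mathbf{r}(t)-\mathbf{r}_{\mathrm{opt}}$ in which the noise enters through the messages $\widetilde{\mathbf{w}}(t)=\mathbf{w}(t)+\boldsymbol{\theta}(t)$ used by neighbors.

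Next, I would introduce the Lyapunov function
\[
V(t):=\|\mathbf{w}(t)-\widetilde{\mathbf{w}}^{*}\|_2^{2}+\frac{4}{(1+4\alpha)\sigma_{\max}^{2}(L_{+})}\|\mathbf{r}(t)-\mathbf{r}_{\mathrm{opt}}\|_2^{2},
\]
so that $V(0)=H_{1}$ by construction. The central step is to prove the one-step contraction
\[
V(t+1)\leq C\cdot V(t)+H_{2}\,\|\boldsymbol{\theta}(t+1)\|_2^{2}.
\]
To obtain this, I would invoke the $\hat{\kappa}$-strong convexity and the $\varrho_{\widetilde{J}}$-Lipschitz-smooth gradient of $\widetilde{J}$ furnished by Lemma~\ref{J_tildle_convex} to control the inner product $\langle \nabla\widetilde{J}(\mathbf{w}(t+1))-\nabla\widetilde{J}(\widetilde{\mathbf{w}}_{\mathrm{opt}}),\mathbf{w}(t+1)-\widetilde{\mathbf{w}}_{\mathrm{opt}}\rangle$, and then apply Young's inequality repeatedly with the split parameters $\lambda_{1},\lambda_{2},\lambda_{3},\lambda_{4}>1$ and the convex-combination weight $b\in(0,1)$. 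The prescribed value of $\beta$ is precisely what makes the primal cross-term between $\mathbf{w}(t+1)-\widetilde{\mathbf{w}}^{*}$ and $\mathbf{r}(t+1)-\mathbf{r}_{\mathrm{opt}}$ absorbable through $\varphi$, while the dual part of $V$ collapses into the same geometric factor $C$; the remainders carrying $\boldsymbol{\theta}(t+1)$ are then gathered into the coefficient $H_{2}$.

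Iterating this one-step inequality downward in $t$ gives
\[
V(t)\leq C^{t}V(0)+\sum_{k=1}^{t}C^{t-k}H_{2}\,\|\boldsymbol{\theta}(k)\|_2^{2}=C^{t}\!\left(H_{1}+\sum_{k=1}^{t}C^{-k}H_{2}\,\|\boldsymbol{\theta}(k)\|_2^{2}\right),
\]
and since $\|\mathbf{w}(t)-\widetilde{\mathbf{w}}^{*}\|_2^{2}\leq V(t)$ and $\widetilde{\mathbf{w}}(t)-\widetilde{\mathbf{w}}^{*}=\mathbf{w}(t)-\widetilde{\mathbf{w}}^{*}+\boldsymbol{\theta}(t)$ (whose cross-term can be folded into $H_{2}$), this yields (\ref{classifer_bound}).

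The main obstacle I anticipate is the consistent choice of all the split parameters. The noise $\boldsymbol{\theta}(t+1)$ enters both the transmitted classifier and, through (\ref{gammai_update}), the dual update, so it appears twice in the Lyapunov recursion and must be controlled simultaneously without inflating $C$ beyond $1$. The condition (\ref{b_delta}) is exactly what makes $C<1$ for sufficiently small $\alpha$, and the upper bound $\alpha<\min\{M_{1},M_{2}\}$ emerges as the joint feasibility region of the Young splits on the primal and dual pieces; the bookkeeping to line everything up so that these constants come out in the stated closed form is the most delicate step.
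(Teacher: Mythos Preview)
Your proposal is essentially correct and follows the same route as the paper, but note that the paper does not actually prove this lemma: it is stated as a direct import from \cite{li2017robust} (``we have the following lemma from \cite{li2017robust}''), with no argument given. What you have sketched is a faithful outline of the robust-ADMM analysis in that reference---the $\mathbf{r}(t)$ reformulation of \cite{makhdoumi2017convergence}, the weighted Lyapunov potential $V(t)$ with $V(0)=H_1$, the one-step contraction $V(t+1)\le C\,V(t)+H_2\|\boldsymbol{\theta}(t+1)\|_2^2$ obtained by combining strong convexity and gradient Lipschitzness with repeated Young splits, and the unrolling to (\ref{classifer_bound}). So your approach and the paper's (implicit) approach coincide; the paper simply defers the bookkeeping you describe to the cited source.
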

Lemma~\ref{classifier_converge} implies that given a connected graph $\mathcal{G}$ and the objective function in Problem~\ref{problem_3}, if the parameters $b$ and $\lambda_1$ satisfy (\ref{b_delta}), then $C$ in (\ref{classifer_bound}) is guaranteed to be less than 1. In this case, the obtained classifiers will converge to the neighborhood of the optimal solution $\widetilde{\mathbf{w}}_{\mathrm{opt}}$, where the radius of the neighborhood is $\lim_{t\rightarrow\infty} \sum_{k=1}^{t} C^{t-k} H_2 \|\boldsymbol{\theta}(k)\|_2^2$. The modified ADMM algorithm can achieve different radii depending on the added noises $\boldsymbol{\theta}(k)$. Since many parameters are involved, to meet the condition (\ref{b_delta}) may not be straightforward. In order to make $C$ smaller to achieve better convergence rate, in addition to the parameters, one may change, for example, the graph $\mathcal{G}$ to make the value $\frac{\sigma_{\max}^2(L_{+})}{\sigma_{\min}^2(L_{+})}$ smaller.

Theorem \ref{delta_Jt} to be stated below gives the upper bound of the accumulated difference $\sum_{i=1}^n \Delta J_{\mathcal{P}}(\overline{\mathbf{w}}_i(t))$ in the sense of expectation. In the theorem, we employ the important concept of Rademacher complexity \cite{shalev2014understanding}. It is defined on the classifier class $\mathcal{W}$ and the collected data used for training, that is, $\mathrm{Rad}_i(\mathcal{W}):=\frac{1}{m_i} \mathbb{E}_{\nu_j}\left[\sup_{\mathbf{w}\in \mathcal{W}} \sum_{j=1}^{m_i} \nu_j \mathbf{w}^\mathrm{T}\mathbf{x}_{i, j}\right]$, where $\nu_1, \nu_2, \ldots, \nu_{m_i}$ are independent random variables drawn from the Rademacher distribution, i.e., $\Pr (\nu_j=1)=\Pr (\nu_j=-1)=\frac{1}{2}$ for $j=1, 2, \ldots, m_i$. In addition, we use the notation $\|\mathbf{v}\|_{A}^2$ to denote the norm of a vector $\mathbf{v}$ with a positive definite matrix $A$, i.e., $\|\mathbf{v}\|_{A}^2=\mathbf{v}^{\mathrm{T}}A\mathbf{v}$.
\begin{theorem} \label{delta_Jt}
Suppose that the conditions in Lemma~\ref{classifier_converge} are satisfied and the decaying rate of noise variance is set as $\rho\in (0,C)$. Then, for $\epsilon>0$ and $\delta\in(0,1)$, the aggregated classifier $\overline{\mathbf{w}}_i(t)$ obtained by the privacy-aware ADMM scheme (\ref{wi_update})-(\ref{gammai_update}) satisfies with probability at least $1-\delta$
\begin{equation}\label{eq_delta_Jt}
\small
\begin{split}
& \mathbb{E}_{\{\boldsymbol{\theta}(k)\}} \left\{\sum_{i=1}^{n} \Delta J_{\mathcal{P}}(\overline{\mathbf{w}}_i(t))\right\} \leq \frac{1}{2t}\frac{C}{1-C}\left(H_1 + \frac{\rho H_2}{C- \rho}\sum_{i=1}^{n} d V_i^2\right) \\
& + \frac{n}{2}R^2 + \frac{\beta}{t}\left[H_3 + \left(\frac{\sigma_{\max}^2(L_{+})}{2\sigma_{\max}^2(L_{-})} + 2\sigma_{\max}^2(Q)\right)\frac{\sum_{i=1}^n d V_i^2}{1-\rho}\right]\\
&  +\frac{1}{n \hat{\kappa}}R^2 + 4 \frac{e^{\epsilon}+1}{e^{\epsilon}-1}\sum_{i=1}^n\left(c_2 \mathrm{Rad}_i(\mathcal{W})+2c_1\sqrt{\frac{ 2\ln(4/\delta)}{m_i}}\right),
\end{split}
\end{equation}
where $H_3 =\|\mathbf{r}(0)\|_2^2 + \|\mathbf{w}(0)-\widetilde{\mathbf{w}}^{*}\|_{\frac{L_+}{2}}^2$, and the parameters $C$, $H_1$, $H_2$ and $\beta$ are found in Lemma \ref{classifier_converge}.
\end{theorem}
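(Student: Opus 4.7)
The plan is to start from the six-term decomposition in (\ref{deltaJ_wi}), sum over $i\in\mathcal{S}$, and match each group of terms to the three sources of uncertainty: the objective-perturbation noise $\boldsymbol{\eta}$, the primal-variable noise $\boldsymbol{\theta}$, and the label randomization parametrised by $\epsilon$. As a first simplification, the third group $\sum_i[\widehat{J}_i(\widehat{\mathbf{w}}_\mathrm{opt})-\widehat{J}_i(\mathbf{w}^\star)]$ is non-positive by Lemma~\ref{modified_solution} (optimality of $\widehat{\mathbf{w}}_\mathrm{opt}$ over the consensus set, which contains the repetition of $\mathbf{w}^\star$), so it is dropped from the upper bound.

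Next, I would bound the optimization/ergodic term $\sum_i[\widetilde{J}_i(\overline{\mathbf{w}}_i(t))-\widetilde{J}_i(\widetilde{\mathbf{w}}_\mathrm{opt})]$. Convexity of $\widetilde{J}_i$ (Lemma~\ref{J_tildle_convex}) together with $\overline{\mathbf{w}}_i(t)=\frac{1}{t}\sum_{k=1}^{t}\mathbf{w}_i(k)$ lets me pass to a time-average $\frac{1}{t}\sum_{k=1}^{t}[\widetilde{J}_i(\mathbf{w}_i(k))-\widetilde{J}_i(\widetilde{\mathbf{w}}_\mathrm{opt})]$, to which the primal--dual ergodic analysis of perturbed ADMM from \cite{li2017robust} applies. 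This yields an $O(1/t)$ inequality with two components: a drift piece of the form $\frac{C}{2t(1-C)}H_1$ originating from the initial conditions (via the iterate bound (\ref{classifer_bound}) of Lemma~\ref{classifier_converge}), and a perturbation piece $\frac{\beta}{t}[H_3+\cdots]$ that couples $\|\boldsymbol{\theta}(k)\|_2^2$ through the matrices $L_{+}, L_{-}, Q$. Taking expectation and using $\mathbb{E}\|\boldsymbol{\theta}(k)\|_2^2=\rho^{k-1}\sum_i dV_i^2$, the double sum $\sum_{k=1}^{t}C^{t-k}\rho^{k-1}$ telescopes to $(C^t-\rho^t)/(C-\rho)$; averaging over $k$ and then summing in $k$ again produces the closed forms $\frac{\rho}{(1-C)(C-\rho)}$ and $\frac{1}{1-\rho}$ that appear inside the first and second brackets of (\ref{eq_delta_Jt}). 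The hypothesis $\rho\in(0,C)$ is precisely what keeps both geometric series summable.

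The second decomposition term $\sum_i[\widehat{J}_i(\widetilde{\mathbf{w}}_\mathrm{opt})-\widehat{J}_i(\widehat{\mathbf{w}}_\mathrm{opt})]$ is controlled via $\hat{\kappa}$-strong convexity of $\widetilde{J}$ combined with the optimality of both $\widehat{\mathbf{w}}^{*}$ and $\widetilde{\mathbf{w}}^{*}$: writing $\widetilde{J}=\widehat{J}+\frac{1}{n}\boldsymbol{\eta}^{\mathrm{T}}\mathbf{w}$, pairing the two optimality inequalities, and applying a Young step to $\frac{1}{n}\boldsymbol{\eta}^{\mathrm{T}}(\widehat{\mathbf{w}}^{*}-\widetilde{\mathbf{w}}^{*})$ returns a quantity proportional to $\|\boldsymbol{\eta}\|_2^2/\hat{\kappa}$, which under $\|\boldsymbol{\eta}\|_\infty\le R$ gives the $\frac{1}{n\hat{\kappa}}R^2$ term of the bound. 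For the residual cross term $\sum_i\boldsymbol{\eta}_i^{\mathrm{T}}(\widetilde{\mathbf{w}}_\mathrm{opt}-\overline{\mathbf{w}}_i(t))$, I would invoke the elementary inequality $\mathbf{u}^{\mathrm{T}}\mathbf{v}\le\frac{1}{2}\|\mathbf{u}\|_2^2+\frac{1}{2}\|\mathbf{v}\|_2^2$: the quadratic in $\overline{\mathbf{w}}_i(t)-\widetilde{\mathbf{w}}_\mathrm{opt}$ is absorbed into the ADMM bound of the previous paragraph, while $\|\boldsymbol{\eta}_i\|_2^2\le R^2$ (after rescaling) furnishes the $\frac{n}{2}R^2$ contribution. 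Finally, the pair $J_{\mathcal{P}_\epsilon}(\overline{\mathbf{w}}_i(t))-\widehat{J}_i(\overline{\mathbf{w}}_i(t))$ and $\widehat{J}_i(\mathbf{w}^\star)-J_{\mathcal{P}_\epsilon}(\mathbf{w}^\star)$ are classical empirical-versus-expected risk gaps on the $\epsilon$-randomized distribution $\mathcal{P}_\epsilon$; by Lemma~\ref{equal_error} they already coincide with the corresponding gaps for $\mathcal{P}$. Applying the Ledoux--Talagrand contraction principle with the Lipschitz constant $\hat{c}_2=\frac{e^\epsilon+1}{e^\epsilon-1}c_2$ from Proposition~\ref{unbiased_loss}(ii) converts these into $c_2\,\mathrm{Rad}_i(\mathcal{W})$ scaled by that factor, and McDiarmid's inequality (using boundedness of $|\hat{\ell}|\le 2c_1$ inherited from Assumption~\ref{loss_assumption}) produces the $2c_1\sqrt{2\ln(4/\delta)/m_i}$ concentration term; a union bound over the two directions of deviation gives the $4/\delta$ and overall probability at least $1-\delta$.

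The main obstacle is making the ergodic ADMM bound coexist cleanly with the $\boldsymbol{\eta}_i$-cross term: the iterate $\overline{\mathbf{w}}_i(t)$ appears inside both the time-averaged drift and the cross term, so the Young constants used to split $\boldsymbol{\eta}_i^{\mathrm{T}}(\widetilde{\mathbf{w}}_\mathrm{opt}-\overline{\mathbf{w}}_i(t))$ must be calibrated against those hidden in Lemma~\ref{classifier_converge} to preserve the exact form of the $H_1, H_2, H_3$ dependence and the coefficient $\beta$ in (\ref{eq_delta_Jt}) without double-counting. A secondary subtlety is that the bound is stated in expectation over $\{\boldsymbol{\theta}(k)\}$ but with probability $1-\delta$ over the data draw, so the two randomness sources must be decoupled: the Gaussian expectation is taken first (inside), and the data-dependent Rademacher/McDiarmid bound is applied conditionally and then combined via union bound outside.
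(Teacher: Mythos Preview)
Your plan is essentially the paper's proof: same six-term decomposition, same use of Jensen to pass to time averages, same invocation of the robust-ADMM results from \cite{li2017robust}, same strong-convexity argument for $\widehat{J}(\widetilde{\mathbf{w}}^{*})-\widehat{J}(\widehat{\mathbf{w}}^{*})$, same Young split on the $\boldsymbol{\eta}$-cross term, and same Rademacher/McDiarmid treatment of the generalization gap.

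One misattribution is worth correcting. You place the piece $\frac{1}{2t}\frac{C}{1-C}\bigl(H_1+\frac{\rho H_2}{C-\rho}\sum_i dV_i^2\bigr)$ inside the ergodic ADMM bound for $\widetilde{J}(\overline{\mathbf{w}}(t))-\widetilde{J}(\widetilde{\mathbf{w}}^{*})$; in the paper it does \emph{not} arise there. The ergodic result (Theorem~1 of \cite{li2017robust}) produces only the $\frac{\beta}{t}[H_3+\cdots]$ contribution. The $H_1,H_2$ piece comes entirely from the $\boldsymbol{\eta}$-cross term: after Young, the quadratic $\frac{1}{2t}\sum_{k=1}^{t}\|\widetilde{\mathbf{w}}(k)-\widetilde{\mathbf{w}}^{*}\|_2^2$ is bounded directly by the iterate estimate (\ref{classifer_bound}) of Lemma~\ref{classifier_converge}, and summing the geometric series in $C$ and $\rho$ gives exactly $\frac{1}{2t}\frac{C}{1-C}\bigl(H_1+\frac{\rho H_2}{C-\rho}\sum_i dV_i^2\bigr)$. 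So the quadratic is not ``absorbed into the ADMM bound of the previous paragraph''; it is handled by a separate tool (Lemma~\ref{classifier_converge}) and contributes its own term. Also, the bound used on $|\hat{\ell}|$ is $\frac{e^{\epsilon}+1}{e^{\epsilon}-1}c_1$, not $2c_1$; this is what makes the factor $\frac{e^{\epsilon}+1}{e^{\epsilon}-1}$ multiply the entire concentration term. With these two bookkeeping fixes your outline matches the paper's proof.
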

\begin{proof}
In what follows, we evaluate the terms in the far right-hand side of (\ref{deltaJ_wi}) by dividing them into three groups. The first is the terms $J_{\mathcal{P}_{\epsilon}}(\overline{\mathbf{w}}_i(t))-\widehat{J}_i(\overline{\mathbf{w}}_i(t))+\widehat{J}_i({\mathbf{w}^\star})-J_{\mathcal{P}_{\epsilon}}(\mathbf{w}^\star)$. We can bound them from above as
\begin{equation}\nonumber
\small
\begin{split}
& J_{\mathcal{P}_{\epsilon}}(\overline{\mathbf{w}}_i(t))-\widehat{J}_i(\overline{\mathbf{w}}_i(t))+\widehat{J}_i({\mathbf{w}^\star})-J_{\mathcal{P}_{\epsilon}}(\mathbf{w}^\star) \\
& \leq2\max_{\mathbf{w}\in\mathcal{W}} \left|J_{\mathcal{P}_{\epsilon}}(\mathbf{w})-\widehat{J}_i({\mathbf{w}})\right|.
\end{split}
\end{equation}
According to Theorem 26.5 in \cite{shalev2014understanding}, with probability at least $1-\delta$, we have
\begin{equation}\nonumber
\small
\begin{split}
& \max_{\mathbf{w}\in\mathcal{W}} \left|J_{\mathcal{P}_{\epsilon}}(\mathbf{w})-\widehat{J}_i({\mathbf{w}})\right| \\
& \leq2\mathrm{Rad}_i(\hat{\ell}\circ\mathcal{W})+4\left|\hat{\ell}(y'_{i, j}, \mathbf{w}_i^{\mathrm{T}}\mathbf{x}_{i, j}, \epsilon)\right|\sqrt{\frac{2\ln (4/\delta)}{m_i}},
\end{split}
\end{equation}
where $\mathrm{Rad}_i(\hat{\ell}\circ\mathcal{W})$ is the Rademacher complexity of $\mathcal{W}$ with respect to $\hat{\ell}$. Further, by the contraction lemma in \cite{shalev2014understanding},
\begin{equation}\nonumber
\small
\mathrm{Rad}_i(\hat{\ell}\circ\mathcal{W})\leq \hat{c}_2\mathrm{Rad}_i(\mathcal{W})=c_2\frac{e^{\epsilon}+1}{e^{\epsilon}-1} \mathrm{Rad}_i(\mathcal{W}),
\end{equation}
where we have used Proposition~\ref{unbiased_loss}. Also, from (\ref{modified_loss}), we derive
\begin{equation}\nonumber
\small
\left|\hat{\ell}(y'_{i, j}, \mathbf{w}_i^{\mathrm{T}}\mathbf{x}_{i, j}, \epsilon)\right| \leq \frac{e^{\epsilon}+1}{e^{\epsilon}-1}c_1,
\end{equation}
where $c_1$ is the bound of the original loss function $\ell(\cdot)$ (Assumption \ref{loss_assumption}). Then, it follows that
\begin{equation}\label{rademacher_final}
\small
\begin{split}
& J_{\mathcal{P}_{\epsilon}}(\overline{\mathbf{w}}_i(t))-\widehat{J}_i({\overline{\mathbf{w}}_i}(t))+\widehat{J}_i({\mathbf{w}^\star})-J_{\mathcal{P}_{\epsilon}}(\mathbf{w}^\star) \\
& \leq 4\frac{e^{\epsilon}+1}{e^{\epsilon}-1}\left(c_2\mathrm{Rad}_i(\mathcal{W})+2c_1\sqrt{\frac{2\ln (4/\delta)}{m_i}}\right).
\end{split}
\end{equation}

The second group in (\ref{deltaJ_wi}) are the terms about $\widetilde{J}_i(\cdot)$ and $\widehat{J}_i(\cdot)$. In their aggregated forms, by Lemma~\ref{modified_solution}, it holds
\begin{equation}\label{summation_form}
\small
\begin{split}
&  \widetilde{J}(\overline{\mathbf{w}}(t))- \widetilde{J}(\widetilde{\mathbf{w}}^{*}) + \widehat{J}(\widetilde{\mathbf{w}}^{*})-\widehat{J}(\widehat{\mathbf{w}}^{*}) + \widehat{J}(\widehat{\mathbf{w}}^{*}) - \widehat{J}({\mathbf{w}^\star}) \\
& \leq \widetilde{J}(\overline{\mathbf{w}}(t))- \widetilde{J}(\widetilde{\mathbf{w}}^{*}) + \widehat{J}(\widetilde{\mathbf{w}}^{*})-\widehat{J}(\widehat{\mathbf{w}}^{*}) \\
& \leq \frac{1}{t} \sum_{k=1}^{t} \widetilde{J}(\mathbf{w}(k))-\widetilde{J}(\widetilde{\mathbf{w}}^{*}) + \widehat{J}(\widetilde{\mathbf{w}}^{*})-\widehat{J}(\widehat{\mathbf{w}}^{*}),
\end{split}
\end{equation}
where we have used Jensen's inequality given the strongly convex $\widetilde{J}(\cdot)$.
For the first two terms in (\ref{summation_form}), by Theorem~1 of \cite{li2017robust}, we have
\begin{equation}\label{taking_expectation}
\small
\begin{split}
& \frac{1}{t} \sum_{k=1}^{t} \widetilde{J}(\mathbf{w}(k))- \widetilde{J}(\widetilde{\mathbf{w}}^{*}) \leq \frac{\beta}{t} \|\mathbf{r}(0)\|_2^2 + \|\mathbf{w}(0)-\widetilde{\mathbf{w}}^{*}\|_{\frac{L_+}{2}}^2 \\
& + \frac{\beta}{t} \sum_{k=1}^{t} \left(\frac{\sigma_{\max}^2(L_{+})}{2\sigma_{\max}^2(L_{-})} \|\boldsymbol{\theta}(k)\|_2^2 + 2\boldsymbol{\theta}(k)^{\mathrm{T}}Q\mathbf{r}(k)\right).
\end{split}
\end{equation}
Take the expectation on both sides of (\ref{taking_expectation}) with respect to $\boldsymbol{\theta}(k)$. Given $\mathbb{E}_{\{\boldsymbol{\theta}(k)\}} \left\{\|\boldsymbol{\theta}(k)\|_2^2\right\}= \sum_{i=1}^{n} d V_i^2 \rho^{k-1}$, we derive
\begin{equation}\nonumber
\small
\begin{split}
\mathbb{E}_{\{\boldsymbol{\theta}(k)\}} \left\{2\boldsymbol{\theta}(k)^{\mathrm{T}}Q\mathbf{r}(k)\right\}& = \mathbb{E}_{\{\boldsymbol{\theta}(k)\}} \left\{2 \|Q \boldsymbol{\theta}(k)\|_2^2\right\} \\
& \leq 2\sigma_{\max}^2(Q) \sum_{i=1}^{n} d V_i^2 \rho^{k-1},
\end{split}
\end{equation}
where we used $\mathbb{E}\left\{\boldsymbol{\theta}(k)\right\}=0$ and $\mathbb{E}_{\{\boldsymbol{\theta}(k)\}}\left\{\boldsymbol{\theta}(k-1) \boldsymbol{\theta}(k)\right\}=0$. Thus, it follows that
\begin{equation}\nonumber
\small
\begin{split}
& \mathbb{E}_{\{\boldsymbol{\theta}(k)\}} \left\{ \sum_{k=1}^{t} \left(\frac{\sigma_{\max}^2(L_{+})}{2\sigma_{\max}^2(L_{-})} \|\boldsymbol{\theta}(k)\|_2^2 + 2\boldsymbol{\theta}(k)^{\mathrm{T}}Q\mathbf{r}(k)\right)\right\} \\
& \leq \left(\frac{\sigma_{\max}^2(L_{+})}{2\sigma_{\max}^2(L_{-})} + 2\sigma_{\max}^2(Q)\right)\sum_{i=1}^{n} d V_i^2 \sum_{k=1}^{t} \rho^{k-1} \\
& \leq \left(\frac{\sigma_{\max}^2(L_{+})}{2\sigma_{\max}^2(L_{-})} + 2\sigma_{\max}^2(Q)\right) \frac{\sum_{i=1}^{n} d V_i^2}{1-\rho}.
\end{split}
\end{equation}
Then, for (\ref{taking_expectation}), we arrive at
\begin{equation}\label{exp_J_tilde}
\small
\begin{split}
& \mathbb{E}_{\{\boldsymbol{\theta}(k)\}} \left\{  \widetilde{J}(\overline{\mathbf{w}}(t))- \widetilde{J}(\widetilde{\mathbf{w}}^{*})\right\} \\
& \leq  \frac{\beta}{t}\left[H_3 + \left(\frac{\sigma_{\max}^2(L_{+})}{2\sigma_{\max}^2(L_{-})} + 2\sigma_{\max}^2(Q)\right)\frac{\sum_{i=1}^n d V_i^2}{1-\rho}\right].
\end{split}
\end{equation}
Next, we focus on the latter two terms in (\ref{summation_form}). Due to (\ref{optimal_perturbation}), we have $\widetilde{J}(\widetilde{\mathbf{w}}^{*})\leq \widetilde{J}(\widehat{\mathbf{w}}^{*})$, which yields
\begin{equation}\nonumber
\small
\widehat{J}(\widetilde{\mathbf{w}}^{*})-\widehat{J}(\widehat{\mathbf{w}}^{*}) \leq \frac{1}{n} \boldsymbol{\eta}^{\mathrm{T}} (\widetilde{\mathbf{w}}^{*}-\widehat{\mathbf{w}}^{*}) \leq \frac{1}{n} \|\boldsymbol{\eta}\|\|\widetilde{\mathbf{w}}^{*}-\widehat{\mathbf{w}}^{*}\|,
\end{equation}
By Lemma 7 in \cite{chaudhuri2011differentially}, we obtain $\|\widetilde{\mathbf{w}}^{*}-\widehat{\mathbf{w}}^{*}\| \leq \frac{1}{n} \frac{\|\boldsymbol{\eta}\|}{\hat{\kappa}}$. It follows
\begin{equation}\label{bound_J_hat}
\small
\widehat{J}(\widetilde{\mathbf{w}}^{*})-\widehat{J}(\widehat{\mathbf{w}}^{*}) \leq \frac{1}{\hat{\kappa}} \frac{\|\boldsymbol{\eta}\|^2}{n^2} \leq \frac{1}{n \hat{\kappa}} R^2,
\end{equation}
where $R$ is the bound of noise $\boldsymbol{\eta}_i$. Substituting (\ref{exp_J_tilde}) and (\ref{bound_J_hat}) into (\ref{summation_form}), we derive
\begin{equation}\label{exp_objective}
\small
\begin{split}
& \mathbb{E}_{\{\boldsymbol{\theta}(k)\}} \{\widetilde{J}(\overline{\mathbf{w}}(t))- \widetilde{J}(\widetilde{\mathbf{w}}^{*}) + \widehat{J}(\widetilde{\mathbf{w}}^{*})-\widehat{J}(\widehat{\mathbf{w}}^{*}) + \widehat{J}(\widehat{\mathbf{w}}^{*}) - \widehat{J}({\mathbf{w}^\star}) \} \\
& \leq \frac{1}{n \hat{\kappa}} R^2 + \frac{\beta}{t}\left[H_3 + \left(\frac{\sigma_{\max}^2(L_{+})}{2\sigma_{\max}^2(L_{-})} + 2\sigma_{\max}^2(Q)\right)\frac{\sum_{i=1}^n d V_i^2}{1-\rho}\right].
\end{split}
\end{equation}

The third group in (\ref{deltaJ_wi}) is the term $\boldsymbol{\eta}^{\mathrm{T}}(\widetilde{\mathbf{w}}^{*}-\overline{\mathbf{w}}(t))$. We have
\begin{equation}\nonumber
\small
\boldsymbol{\eta}^{\mathrm{T}}(\widetilde{\mathbf{w}}^{*}-\overline{\mathbf{w}}(t)) = \boldsymbol{\eta}^{\mathrm{T}} \left(\widetilde{\mathbf{w}}^{*}-\frac{1}{t} \sum_{k=1}^{t}(\widetilde{\mathbf{w}}(k)-\boldsymbol{\theta}(k))\right).
\end{equation}
Taking the expectation with respect to $\boldsymbol{\theta}(k)$, we obtain
\begin{equation}\nonumber
\small
\begin{split}
& \mathbb{E}_{\{\boldsymbol{\theta}(k)\}} \left\{\boldsymbol{\eta}^{\mathrm{T}}(\widetilde{\mathbf{w}}^{*}-\overline{\mathbf{w}}(t))\right\} \\
& = \mathbb{E}_{\{\boldsymbol{\theta}(k)\}} \left\{\boldsymbol{\eta}^{\mathrm{T}}\left(\widetilde{\mathbf{w}}^{*}- \frac{1}{t} \sum_{k=1}^{t} \widetilde{\mathbf{w}}(k)\right)\right\} \\
& \leq \frac{1}{2} \|\boldsymbol{\eta}\|_2^2 + \mathbb{E}_{\{\boldsymbol{\theta}(k)\}} \left\{\frac{1}{2t^2} \left\|\sum_{k=1}^{t} (\widetilde{\mathbf{w}}^{*}-\widetilde{\mathbf{w}}(k)) \right\|_2^2\right\} \\
& \leq \frac{1}{2} \|\boldsymbol{\eta}\|_2^2 + \mathbb{E}_{\{\boldsymbol{\theta}(k)\}} \left\{\frac{1}{2t} \sum_{k=1}^{t} \left\|\widetilde{\mathbf{w}}^{*}-\widetilde{\mathbf{w}}(k)\right\|_2^2\right\}.
\end{split}
\end{equation}
By Lemma \ref{classifier_converge}, we have
\begin{equation}\nonumber
\small
\left\|\widetilde{\mathbf{w}}^{*}-\widetilde{\mathbf{w}}(k)\right\|_2^2 \leq C^{t} \left(H_1 + \sum_{k=1}^{t} C^{-k} H_2 \|\boldsymbol{\theta}(k)\|_2^2\right).
\end{equation}
Then, it follows that
\begin{equation}\label{exp_eta_w}
\small
\begin{split}
& \mathbb{E}_{\{\boldsymbol{\theta}(k)\}} \left\{\boldsymbol{\eta}^{\mathrm{T}}(\widetilde{\mathbf{w}}^{*}-\overline{\mathbf{w}}(t))\right\} \\
& \leq \frac{1}{2} \|\boldsymbol{\eta}\|_2^2 + \frac{1}{2t} \mathbb{E}_{\{\boldsymbol{\theta}(k)\}} \left\{\sum_{k=1}^{t} C^{t} \left(H_1 + \sum_{k=1}^{t} C^{-k} H_2 \|\boldsymbol{\theta}(k)\|_2^2\right) \right\} \\
& \leq \frac{n}{2} R^2 + \frac{1}{2t}\frac{C}{1-C} \left(H_1 + \frac{\rho H_2}{C- \rho}\sum_{i=1}^{n} d V_i^2\right),
\end{split}
\end{equation}
where we have used $0<\rho<C$. Substituting (\ref{rademacher_final}), (\ref{exp_objective}) and (\ref{exp_eta_w}) into (\ref{deltaJ_wi}), we arrive at the bound in (\ref{eq_delta_Jt}).
\end{proof}
Theorem \ref{delta_Jt} provides a guidance for both users and servers to obtain a classification model with desired performance. In particular, the effects of three uncertainties on the bound of $\sum_{i=1}^n \Delta J_{\mathcal{P}}(\overline{\mathbf{w}}_i(t))$ have been successfully decomposed. Note that these effects are not simply superimposed but coupled together. Specifically, the terms in (\ref{eq_delta_Jt}) related to the primal variable perturbation decrease with iterations at the rate of $O\left(\frac{1}{t}\right)$. This also implies that the whole framework achieves convergence in expectation at this rate.

Compared with \cite{ding2019optimal} and \cite{li2017robust}, where bounds of $\frac{1}{t} \sum_{k=1}^{t} \widetilde{J}(\mathbf{w}(k))- \widetilde{J}(\widetilde{\mathbf{w}}^{*})$ are provided, we derive the difference between the generalization error of the aggregated classifier $\overline{\mathbf{w}}(t)$ and that of the ideal optimal classifier $\mathbf{w}^{\star}$, which is moreover given in a closed form. The bound in (\ref{eq_delta_Jt}) contains the effect of the unknown data distribution $\mathcal{P}$ while the bound of $\frac{1}{t} \sum_{k=1}^{t} \widetilde{J}(\mathbf{w}(k))- \widetilde{J}(\widetilde{\mathbf{w}}^{*})$ covers only the role of existing data. Although \cite{zhang2017dynamic} also considers the generalization error of found classifiers, no closed form of the bound is given, and the obtained bound may not decrease with iterations since the reference classifier therein is not $\mathbf{w}^{\star}$ but a time-varying one. In the more centralized setting of \cite{chaudhuri2011differentially}, $\Delta J_{\mathcal{P}}(\mathbf{w})$ is analyzed for the derived classifier $\mathbf{w}$, but there is no convergence issue since $\mathbf{w}$ is perturbed and published only once. 

Moreover, different from the works \cite{chaudhuri2011differentially,zhang2017dynamic,ding2019optimal} and \cite{li2017robust}, our analysis considers the effects of the classifier class $\mathcal{W}$ by Rademacher complexity. Such effects have been used in \cite{shalev2014understanding} in non-private centralized machine learning scenarios. Furthermore, in the privacy-aware (centralized or distributed) frameworks of \cite{chaudhuri2011differentially,zhang2017dynamic,ding2019optimal} and the robust ADMM scheme for erroneous updates \cite{li2017robust}, there is only one type of noise perturbation, and the uncertainty in the training data is not considered. 

\subsection{Comparisons and Discussions} \label{com_dis}
Here, we compare the proposed framework with existing schemes from the perspective of privacy and performance, and discuss how each parameter contributes to the results.

First, we find that the bound in (\ref{eq_delta_Jt}) is larger than those in \cite{chaudhuri2011differentially,zhang2017dynamic,ding2019optimal} if we adopt the approach in this paper to conduct performance analysis on these works. This is obvious since there are more perturbations in our setting. However, as we have discussed in Section \ref{discussion_privacy}, these existing frameworks do not meet the heterogeneous privacy requirements, and some of them cannot avoid accumulation of privacy losses, resulting in no protection at all. It should be emphasized that extra performance costs must be paid when the data contributors want to obtain stronger privacy guarantee. These existing frameworks may be better than ours in the sense of performance, but the premise is that users accept the privacy preservation provided by them. If users require heterogenous privacy protection, our framework can be more suitable.

Further, compared with \cite{chaudhuri2011differentially,zhang2017dynamic,ding2019optimal}, \cite{li2017robust} and \cite{shalev2014understanding}, we provide a more systematic result on the performance analysis in Theorem~\ref{delta_Jt}, where most parameters related to useful measures of classifiers (also privacy preservation) are included. Servers and users can set these parameters as needed, and thus obtain classifiers which can appropriately balance the privacy and the performance. We will discuss the roles of these parameters after some further analysis on the theoretical result.

According to Lemma~\ref{classifier_converge}, the classifiers solved by different servers converge to $\widetilde{\mathbf{w}}_\mathrm{opt}$ in the sense of expectation. The performance of $\widetilde{\mathbf{w}}_\mathrm{opt}$ can be analyzed in a similar way as in Theorem~\ref{delta_Jt}. This is given in the following corollary.
\begin{corollary} \label{corollary1}
For $\epsilon>0$ and $\delta\in(0,1)$, with probability at least $1-\delta$, we have
\begin{equation}\label{Jp_w_tilde}
\small
\begin{split}
& \Delta J_{\mathcal{P}}(\widetilde{\mathbf{w}}_\mathrm{opt}) \\
& \leq \frac{4}{n}\frac{e^{\epsilon}+1}{e^{\epsilon}-1} \sum_{i=1}^n \left(c_2 \mathrm{Rad}_i(\mathcal{W})+2c_1\sqrt{\frac{ 2\ln(4/\delta)}{m_i}}\right)+ \frac{1}{n\hat{\kappa}}R^2.
\end{split}
\end{equation}
\end{corollary}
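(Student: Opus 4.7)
The plan is to read off Corollary~\ref{corollary1} as a specialization of Theorem~\ref{delta_Jt} obtained by substituting the exact minimizer $\widetilde{\mathbf{w}}_{\mathrm{opt}}$ in place of each moving ADMM iterate $\overline{\mathbf{w}}_i(t)$. The payoff of this substitution is that the error contributions that Theorem~\ref{delta_Jt} has to pay for collapse: the ADMM-convergence piece $\widetilde{J}_i(\overline{\mathbf{w}}_i(t))-\widetilde{J}_i(\widetilde{\mathbf{w}}_{\mathrm{opt}})$ is zero, the cross term $\boldsymbol{\eta}^{\mathrm{T}}(\widetilde{\mathbf{w}}_{\mathrm{opt}}-\overline{\mathbf{w}}_i(t))$ vanishes, and every term arising from the primal-variable noise $\boldsymbol{\theta}(k)$ disappears. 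Only a generalization-gap term and an objective-perturbation-bias term remain, and these are precisely the two summands on the right-hand side of (\ref{Jp_w_tilde}).

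Concretely, I would first invoke Lemma~\ref{equal_error} to pass from $J_{\mathcal{P}}$ to $J_{\mathcal{P}_{\epsilon}}$, writing $\Delta J_{\mathcal{P}}(\widetilde{\mathbf{w}}_{\mathrm{opt}})=\tfrac{1}{n}\sum_{i=1}^{n}[J_{\mathcal{P}_{\epsilon}}(\widetilde{\mathbf{w}}_{\mathrm{opt}})-J_{\mathcal{P}_{\epsilon}}(\mathbf{w}^{\star})]$, and then, for each server $i$, insert and cancel $\widehat{J}_i(\widetilde{\mathbf{w}}_{\mathrm{opt}})$, $\widehat{J}_i(\widehat{\mathbf{w}}_{\mathrm{opt}})$, and $\widehat{J}_i(\mathbf{w}^{\star})$ in the style of (\ref{deltaJ_wi}). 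The decomposition produces three groups: (a) two symmetric empirical-versus-expected risk gaps evaluated at $\widetilde{\mathbf{w}}_{\mathrm{opt}}$ and $\mathbf{w}^{\star}$; (b) the aggregated gap $\tfrac{1}{n}[\widehat{J}(\widetilde{\mathbf{w}}^{*})-\widehat{J}(\widehat{\mathbf{w}}^{*})]$; and (c) a consensus-optimality residue $\tfrac{1}{n}[\widehat{J}(\widehat{\mathbf{w}}^{*})-\widehat{J}(\mathbf{w}^{\star},\ldots,\mathbf{w}^{\star})]\leq 0$ that can be discarded by Lemma~\ref{modified_solution}.

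For group (a), I would reuse the ingredients of the first block of the Theorem~\ref{delta_Jt} proof verbatim: Theorem~26.5 of Shalev-Shwartz and Ben-David, the contraction lemma with Lipschitz constant $\hat{c}_2=c_2(e^{\epsilon}+1)/(e^{\epsilon}-1)$ from Proposition~\ref{unbiased_loss}, and the uniform bound $|\hat{\ell}|\leq c_1(e^{\epsilon}+1)/(e^{\epsilon}-1)$ read off from (\ref{modified_loss}), wrapped in a union bound over the two evaluation points (the extra constant can be absorbed into the log factor by redefining $\delta$). Summing and dividing by $n$ produces the Rademacher piece $\tfrac{4}{n}\tfrac{e^{\epsilon}+1}{e^{\epsilon}-1}\sum_{i=1}^{n}(c_2\,\mathrm{Rad}_i(\mathcal{W})+2c_1\sqrt{2\ln(4/\delta)/m_i})$. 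For group (b), I would repeat the estimate that produced (\ref{bound_J_hat}): optimality of $\widetilde{\mathbf{w}}^{*}$ for $\widetilde{J}$ gives $\widehat{J}(\widetilde{\mathbf{w}}^{*})-\widehat{J}(\widehat{\mathbf{w}}^{*})\leq \tfrac{1}{n}\boldsymbol{\eta}^{\mathrm{T}}(\widehat{\mathbf{w}}^{*}-\widetilde{\mathbf{w}}^{*})$, then Cauchy--Schwarz together with Lemma~7 of \cite{chaudhuri2011differentially}, whose $\hat{\kappa}$-strong-convexity hypothesis is supplied by Lemma~\ref{J_tildle_convex}, yields $\|\widetilde{\mathbf{w}}^{*}-\widehat{\mathbf{w}}^{*}\|\leq \|\boldsymbol{\eta}\|/(n\hat{\kappa})$, and $\|\boldsymbol{\eta}\|^{2}\leq nR^{2}$ delivers the $R^{2}/(n\hat{\kappa})$ residual of (\ref{Jp_w_tilde}). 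The only delicate point I expect is bookkeeping --- tracking the extra $1/n$ factor introduced by the per-server averaging so that the Rademacher coefficient reads $4/n$ rather than the $4$ of Theorem~\ref{delta_Jt} --- since no machinery beyond what Theorem~\ref{delta_Jt} already invokes is needed.
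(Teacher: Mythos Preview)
Your proposal is correct and is precisely the argument the paper has in mind: the paper gives no separate proof of Corollary~\ref{corollary1} beyond remarking that it ``can be analyzed in a similar way as in Theorem~\ref{delta_Jt},'' and your specialization of the decomposition~(\ref{deltaJ_wi}) to $\overline{\mathbf{w}}_i(t)=\widetilde{\mathbf{w}}_{\mathrm{opt}}$---so that the ADMM and $\boldsymbol{\theta}$-terms vanish and only the Rademacher block~(\ref{rademacher_final}) and the perturbation block~(\ref{bound_J_hat}) survive---is exactly that. One minor bookkeeping note: your own averaging by $1/n$ applied to~(\ref{bound_J_hat}) actually yields $R^2/(n^2\hat{\kappa})$, which is stronger than the stated $R^2/(n\hat{\kappa})$; the corollary as written is simply not sharp in that term, so your argument still establishes it.
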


For the sake of comparison, the next theorem provides a performance analysis when the privacy-preserving approach in Phase~2 is removed, and a corresponding result on the bound of $\Delta J_{\mathcal{P}}(\widehat{\mathbf{w}}_\mathrm{opt})$ is given in the subsequent corollary.
\begin{theorem} \label{thm3}
For $\epsilon>0$ and $\delta\in(0,1)$, the aggregated classifier $\overline{\mathbf{w}}_i(t)$ obtained by the original ADMM scheme (\ref{new_primal_local}) and (\ref{new_dual_local}) satisfies with probability at least $1-\delta$
\begin{equation}\label{delta_Jt_unper}
\small
\begin{split}
\sum_{i=1}^{n} \Delta J_{\mathcal{P}}(\overline{\mathbf{w}}_i(t)) & \leq \frac{\beta}{t} \left(\|\mathbf{w}(0)-\widehat{\mathbf{w}}^{*}\|_{\frac{L_+}{2}}^2 + \|\mathbf{r}(0)\|_2^2\right) \\
& + 4 \frac{e^{\epsilon}+1}{e^{\epsilon}-1}\sum_{i=1}^n\left(c_2 \mathrm{Rad}_i(\mathcal{W})+2c_1\sqrt{\frac{ 2\ln(4/\delta)}{m_i}}\right).
\end{split}
\end{equation}
\end{theorem}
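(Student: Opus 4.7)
The plan is to mirror the decomposition used for Theorem~\ref{delta_Jt}, but specialized to the noise-free case where neither the objective-function noise $\boldsymbol{\eta}_i$ nor the Gaussian primal perturbation $\boldsymbol{\theta}_i(t)$ is present; only the Phase~1 label randomization remains, so each server still minimizes the modified empirical risk $\widehat{J}_i$ with the unbiased loss $\hat{\ell}$. Concretely, I would write
\begin{equation}\nonumber
\small
\begin{split}
\Delta J_{\mathcal{P}}(\overline{\mathbf{w}}_i(t)) & = \bigl[\widehat{J}_i(\overline{\mathbf{w}}_i(t))-\widehat{J}_i(\widehat{\mathbf{w}}_\mathrm{opt})\bigr] + \bigl[\widehat{J}_i(\widehat{\mathbf{w}}_\mathrm{opt})-\widehat{J}_i(\mathbf{w}^\star)\bigr] \\
& \quad +\bigl[J_{\mathcal{P}_{\epsilon}}(\overline{\mathbf{w}}_i(t))-\widehat{J}_i(\overline{\mathbf{w}}_i(t))\bigr] +\bigl[\widehat{J}_i(\mathbf{w}^\star)-J_{\mathcal{P}_{\epsilon}}(\mathbf{w}^\star)\bigr],
\end{split}
\end{equation}
where I have used Lemma~\ref{equal_error} to replace $J_{\mathcal{P}}$ by $J_{\mathcal{P}_{\epsilon}}$. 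This is strictly the analogue of (\ref{deltaJ_wi}) with the $\widetilde{J}_i$--$\widehat{J}_i$ gap and the $\boldsymbol{\eta}_i^{\mathrm{T}}(\widetilde{\mathbf{w}}_\mathrm{opt}-\overline{\mathbf{w}}_i(t))$ term deleted.

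Next, the last two bracketed quantities (the generalization gap for a single server) are bounded exactly as in the derivation of (\ref{rademacher_final}) in the proof of Theorem~\ref{delta_Jt}: apply the uniform-convergence bound from Theorem~26.5 of \cite{shalev2014understanding}, the contraction lemma to replace the Rademacher complexity of $\hat{\ell}\circ\mathcal{W}$ by $\hat{c}_2\mathrm{Rad}_i(\mathcal{W})$ using Proposition~\ref{unbiased_loss}(ii), and the uniform bound $|\hat{\ell}|\leq \tfrac{e^{\epsilon}+1}{e^{\epsilon}-1}c_1$. Summing over $i$ and invoking a union bound on the confidence parameter $\delta$ yields the second term of (\ref{delta_Jt_unper}).

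For the first bracket, I would sum over $i$ and work with the compact form $\widehat{J}(\overline{\mathbf{w}}(t))-\widehat{J}(\widehat{\mathbf{w}}^{*})$. By Jensen's inequality applied to the convex $\widehat{J}(\cdot)$ (whose strong convexity comes from Lemma~\ref{objective_convex}), this is at most $\tfrac{1}{t}\sum_{k=1}^{t}\widehat{J}(\mathbf{w}(k))-\widehat{J}(\widehat{\mathbf{w}}^{*})$. I then invoke Theorem~1 of \cite{li2017robust} (as quoted in (\ref{taking_expectation})) with the noise sequence set to $\boldsymbol{\theta}(k)\equiv 0$ and with $\widetilde{J},\widetilde{\mathbf{w}}^{*}$ replaced by $\widehat{J},\widehat{\mathbf{w}}^{*}$; the two stochastic terms vanish, leaving precisely the deterministic bound $\frac{\beta}{t}\bigl(\|\mathbf{r}(0)\|_2^2+\|\mathbf{w}(0)-\widehat{\mathbf{w}}^{*}\|_{L_+/2}^2\bigr)$. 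Finally, the middle bracket sums to $\widehat{J}(\widehat{\mathbf{w}}^{*})-\widehat{J}(\mathbf{w}^\star)\leq 0$ because $\widehat{\mathbf{w}}_\mathrm{opt}$ is, by Lemma~\ref{modified_solution}, a global consensus minimizer of $\widehat{J}$ over $\mathcal{W}$, and $[\mathbf{w}^{\star\mathrm{T}}\cdots\mathbf{w}^{\star\mathrm{T}}]^{\mathrm{T}}$ is a feasible consensus vector. Collecting the three pieces gives (\ref{delta_Jt_unper}).

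The main subtlety I anticipate is being explicit that the cited ADMM convergence bound remains valid when the $\boldsymbol{\eta}$-perturbed objective $\widetilde{J}$ is replaced by the unperturbed $\widehat{J}$: what is actually needed is strong convexity with parameter $\hat{\kappa}$ and a Lipschitz gradient, and both properties still hold for $\widehat{J}$ (Lemmas~\ref{objective_convex} and \ref{J_tildle_convex}, with the same constants since the $\boldsymbol{\eta}_i^{\mathrm{T}}\mathbf{w}_i$ term is linear and does not alter the Hessian). Once this verification is made, no further work beyond bookkeeping is required, and no expectation over $\boldsymbol{\theta}$ is taken since the algorithm is now deterministic given the data.
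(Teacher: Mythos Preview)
Your proposal is correct and follows exactly the route the paper takes: the paper does not give a separate proof of Theorem~\ref{thm3}, but the intended argument is precisely the specialization of the proof of Theorem~\ref{delta_Jt} to the case $\boldsymbol{\eta}_i\equiv 0$, $\boldsymbol{\theta}_i(t)\equiv 0$, which collapses the decomposition (\ref{deltaJ_wi}) to your four brackets and reduces (\ref{taking_expectation}) to the deterministic term $\frac{\beta}{t}\bigl(\|\mathbf{r}(0)\|_2^2+\|\mathbf{w}(0)-\widehat{\mathbf{w}}^{*}\|_{L_+/2}^2\bigr)$. Your observation that the required strong convexity and Lipschitz-gradient constants for $\widehat{J}$ coincide with those for $\widetilde{J}$ (the linear term $\boldsymbol{\eta}_i^{\mathrm{T}}\mathbf{w}_i$ does not affect the Hessian) is the only point that needs to be made explicit, and you have done so.
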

\begin{corollary} \label{corollary2}
For $\epsilon>0$ and $\delta\in(0,1)$, with probability at least $1-\delta$, we have
\begin{equation}\label{Jp_w_hat}
\small
\Delta J_{\mathcal{P}}(\widehat{\mathbf{w}}_\mathrm{opt})
\leq\frac{4}{n}\frac{e^{\epsilon}+1}{e^{\epsilon}-1} \sum_{i=1}^n \left(c_2 \mathrm{Rad}_i(\mathcal{W})+2c_1\sqrt{\frac{ 2\ln(4/\delta)}{m_i}}\right).
\end{equation}
\end{corollary}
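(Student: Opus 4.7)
The plan is to mirror the first-group argument in the proof of Theorem~\ref{delta_Jt}, but applied to the deterministic optimizer $\widehat{\mathbf{w}}_\mathrm{opt}$ of Problem~\ref{problem_2} rather than to the iterates $\overline{\mathbf{w}}_i(t)$. Since no primal variable or objective function perturbation is present in this setting, the $\boldsymbol{\theta}$-, $\boldsymbol{\eta}$-, and $\beta/t$-terms in (\ref{eq_delta_Jt}) should all vanish, leaving only the uniform-convergence contribution.

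Concretely, I would first invoke Lemma~\ref{equal_error} to rewrite the target as $\Delta J_{\mathcal{P}}(\widehat{\mathbf{w}}_\mathrm{opt}) = J_{\mathcal{P}_{\epsilon}}(\widehat{\mathbf{w}}_\mathrm{opt}) - J_{\mathcal{P}_{\epsilon}}(\mathbf{w}^{\star})$. Then, for each server index $i$, I would insert the empirical surrogate $\widehat{J}_i$ to obtain the decomposition
\begin{equation}\nonumber
\small
\begin{split}
\Delta J_{\mathcal{P}}(\widehat{\mathbf{w}}_\mathrm{opt}) =\,& \bigl[J_{\mathcal{P}_{\epsilon}}(\widehat{\mathbf{w}}_\mathrm{opt})-\widehat{J}_i(\widehat{\mathbf{w}}_\mathrm{opt})\bigr] + \bigl[\widehat{J}_i(\widehat{\mathbf{w}}_\mathrm{opt})-\widehat{J}_i(\mathbf{w}^{\star})\bigr] \\
& + \bigl[\widehat{J}_i(\mathbf{w}^{\star})-J_{\mathcal{P}_{\epsilon}}(\mathbf{w}^{\star})\bigr].
\end{split}
\end{equation}
Summing over $i=1,\dots,n$, the middle term collapses into $\widehat{J}(\widehat{\mathbf{w}}^{*}) - \widehat{J}(\mathbf{w}^{\star}_{\mathrm{cons}})$, where $\mathbf{w}^{\star}_{\mathrm{cons}}$ is the $n$-fold stacking of $\mathbf{w}^{\star}$. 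This quantity is nonpositive by Lemma~\ref{modified_solution}, since $\widehat{\mathbf{w}}_\mathrm{opt}$ is the consensus minimizer of $\widehat{J}$ and $\mathbf{w}^{\star}_{\mathrm{cons}}$ is a feasible consensus point.

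What remains is to bound the first and third bracketed terms uniformly by $2\max_{\mathbf{w}\in\mathcal{W}}|J_{\mathcal{P}_{\epsilon}}(\mathbf{w})-\widehat{J}_i(\mathbf{w})|$. This is precisely the step handled in the proof of Theorem~\ref{delta_Jt}: by Theorem~26.5 of \cite{shalev2014understanding}, the contraction lemma combined with Proposition~\ref{unbiased_loss}(ii), and the bound $|\hat{\ell}|\le \frac{e^{\epsilon}+1}{e^{\epsilon}-1}c_1$ derived from (\ref{modified_loss}), one obtains the same per-server bound $4\tfrac{e^{\epsilon}+1}{e^{\epsilon}-1}\bigl(c_2\mathrm{Rad}_i(\mathcal{W})+2c_1\sqrt{2\ln(4/\delta)/m_i}\bigr)$ with probability $\geq 1-\delta$. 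Dividing the resulting aggregate inequality by $n$ yields (\ref{Jp_w_hat}). The only subtlety worth flagging, and the place where mistakes are easiest to make, is the scaling factor: the per-server uniform bound must be summed over $i$ on the right, while the left-hand side is $n\,\Delta J_{\mathcal{P}}(\widehat{\mathbf{w}}_\mathrm{opt})$, which accounts for the $1/n$ prefactor appearing in the final statement and distinguishes (\ref{Jp_w_hat}) from (\ref{Jp_w_tilde}) by the absence of the $R^2/(n\hat\kappa)$ term that came from the $\boldsymbol{\eta}$-perturbation in Corollary~\ref{corollary1}.
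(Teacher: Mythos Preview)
Your proposal is correct and follows essentially the same route the paper takes: specialize the decomposition (\ref{deltaJ_wi}) to the point $\widehat{\mathbf{w}}_\mathrm{opt}$, so that all $\widetilde{J}$-, $\boldsymbol{\eta}$-, and $\boldsymbol{\theta}$-terms vanish and only the ``first group'' uniform-convergence terms together with the nonpositive $\widehat{J}(\widehat{\mathbf{w}}^{*})-\widehat{J}(\mathbf{w}^{\star})$ remain; your handling of the $1/n$ scaling is also correct. The paper does not write out the proof of Corollary~\ref{corollary2} explicitly, but your argument is exactly the intended one.
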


It is observed that the bound in (\ref{delta_Jt_unper}) is not in expectation since there is no noise perturbation during the ADMM iterations. It is interesting to note that the convergence rate of the unperturbed ADMM algorithm is also $O(\frac{1}{t})$. This implies that the modified ADMM algorithm preserves the convergence speed of the general distributed ADMM scheme.

However, there exists a tradeoff between performance and privacy protection. Comparing (\ref{eq_delta_Jt}) and (\ref{delta_Jt_unper}), we find that the extra terms in (\ref{eq_delta_Jt}) are the results of perturbations in Phase~2. Also, the effect of the objective function perturbation is reflected in (\ref{Jp_w_tilde}), that is, the term $\frac{1}{n\hat{\kappa}}R^2$. When $R$ (the bound of $\boldsymbol{\eta}_i$) increases, the generalization error of the trained classifier would increase as well, indicating worse performance. Similarly, if we use noises with larger initial variances and decaying rates to perturb the solved classifiers in each iteration, the bound in (\ref{eq_delta_Jt}) will also increase. 

\textbf{Effect of data quality}. We observe that the bound of $\Delta J_{\mathcal{P}}(\widehat{\mathbf{w}}_\mathrm{opt})$ in (\ref{Jp_w_hat}) also appears in (\ref{eq_delta_Jt}), (\ref{Jp_w_tilde}) and (\ref{delta_Jt_unper}). This bound reflects the effect of users' reported data, whose labels are randomized in Phase~1. It can be seen that besides the probability $\delta$, the bound in (\ref{Jp_w_hat}) is affected by three factors: PPD $\epsilon$, Rademacher complexity $\mathrm{Rad}_i(\mathcal{W})$, and the number of data samples $m_i$. Here, we discuss the roles of these factors. 

For the effect of PPD, we find that when $\epsilon$ is small, the bound will decrease with an increase in $\epsilon$. However, when $\epsilon$ is sufficiently large, it has limited influence on the bound. In particular, by taking $\epsilon\rightarrow\infty$, the bound reduces to that for the optimal solution of Problem~\ref{problem_1}, where $(e^{\epsilon}+1)/(e^{\epsilon}-1)$ goes to 1 in (\ref{Jp_w_hat}). Note that $\mathrm{Rad}_i(\mathcal{W})$ and $m_i$ still remain and affect the performance. 

For the effect of $\mathrm{Rad}_i(\mathcal{W})$, we observe that the generalization errors of trained classifiers may become larger when $\mathrm{Rad}_i(\mathcal{W})$ increases. The Rademacher complexity is directly related to the size of the classifier class $\mathcal{W}$. If there are only a small number of candidate classifiers in $\mathcal{W}$, the solutions have a high probability of obtaining smaller deviation between their generalization errors and the reference generalization error $J_{\mathcal{P}}(\mathbf{w}^{\star})$. Nevertheless, we should guarantee the richness of the class $\mathcal{W}$ to make $J_{\mathcal{P}}(\mathbf{w}^{\star})$ small since $\mathbf{w}^\star$ trained in terms of $\mathcal{W}$ will have large generalization error. Though the deviation $\Delta J_{\mathcal{P}}(\cdot)$ may be small, the trained classifiers are not good predictors due to the bad performance of $\mathbf{w}^\star$.  Thus, setting an appropriate classifier class is important for obtaining a classifier with qualified performance. 

Finally, we consider the effect of the number of users. From the bound of $\Delta J_{\mathcal{P}}(\widehat{\mathbf{w}}_\mathrm{opt})$ in (\ref{Jp_w_hat}), we know that if $m_i$ becomes larger, the last term of the bound will decrease. In general, more data samples imply access to more information about the underlying distribution $\mathcal{P}$. Then, the trained classifier can predict the labels of newly sampled data from $\mathcal{P}$ with higher accuracy. Moreover, it can be seen that the bound is the average of $n$ local errors generated in different servers. When new servers participate in the DML framework, these servers should make sure that they have collected sufficient amount of training data samples. Otherwise, the bound may not decrease though the total number of data samples increases. This is because unbalanced local errors may lead to an increase in their average, implying larger bound of $\Delta J_{\mathcal{P}}(\cdot)$.
\begin{figure}
\vspace*{-2mm}
\begin{center}
\subfigure[Norm of the errors]{\label{consensus_obs}
\includegraphics[scale=0.45]{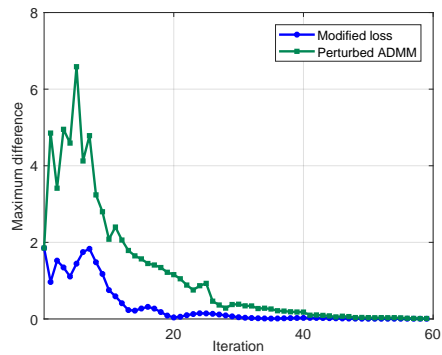}}
\subfigure[Empirical risks]{\label{com_risks}
\includegraphics[scale=0.45]{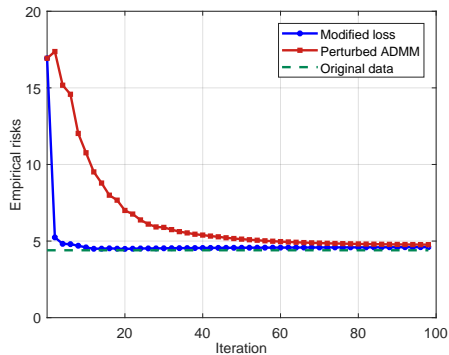}}
\caption{{\small The convergence properties of PDML.}}
\end{center}
\end{figure}
\section{Experimental Evaluation} \label{evaluation}
In this section, we conduct experiments to validate the obtained theoretical results and study the classification performance of the proposed PDML framework. Specifically, we first use a real-world dataset to verify the convergence property of the PDML framework and study how key parameters would affect the performance. Also, we leverage another seven datasets to verify the classification accuracy of the classifiers trained by the framework.
\subsection{Experiment Setup}
\subsubsection{Datasets} We use two kinds of publicly available datasets as described below to validate the convergence property and classification accuracy of the PDML.

(i) Adult dataset \cite{Dua2019}. The dataset contains census data of 48,842 individuals, where there are 14 attributes (e.g., age, work-class, education, occupation and native-country) and a label indicating whether a person's annual income is over \$50,000. After removing the instances with missing values, we obtain a training dataset with 45,222 samples. To preprocess the dataset, we adopt unary encoding approach to transform the categorial attributes into binary vectors, and further normalize the whole feature vector to be a vector with maximum norm of 1. The preprocessed feature vector is a 105-dimensional vector. For the labels, we mark the annual income over \$50,000 as 1, otherwise it is labeled as $-1$.

(ii) Gunnar R\"{a}tsch's benchmark datasets \cite{ucidata}. There are thirteen data subsets from the UCI repository in the benchmark datasets. To mitigate the effect of data quality, we select seven datasets with the largest data sizes to conduct experiments. The seven datasets are \textit{German}, \textit{Image}, \textit{Ringnorm}, \textit{Banana}, \textit{Splice}, \textit{Twonorm} and \textit{Waveform}, where the numbers of instances are 1,000, 2,086, 7,400, 5,300, 2,991, 7,400 and 5,000, respectively. Each dataset is partitioned into training and test data, with a ratio of approximately $70\%:30\%$.
\subsubsection{Underlying classification approach} Logistic regression (LR) is utilized for training the prediction model, where the loss function and regularizer are $\ell_{LR} (y_{i, j}, \mathbf{w}_i^\mathrm{T}\mathbf{x}_{i, j})= \log \bigl(1+e^{-y_{i, j} \mathbf{w}_i^\mathrm{T}\mathbf{x}_{i, j}}\bigr)$ and $N(\mathbf{w}_i) = \frac{1}{2}\|\mathbf{w}_i\|^2$, respectively. Then, the local objective function is given by
\begin{equation}\nonumber
\small
J_i(\mathbf{w}_i) = \sum_{j=1}^{m_i}\frac{1}{m_i}\log \left(1+e^{-y_{i, j} \mathbf{w}_i^\mathrm{T}\mathbf{x}_{i, j}}\right)+\frac{a}{2n}\|\mathbf{w}_i\|^2.
\end{equation}
It is easy to check that when the classifier class $\mathcal{W}$ is bounded (e.g., a bounded set $\mathcal{W}= \{\mathbf{w}\in\mathbb{R}^d\; |\; \|\mathbf{w}\|\leq W\}$), $\ell_{LR} (\cdot)$ satisfies Assumption~\ref{loss_assumption}. Due to the convexity property of $N(\mathbf{w}_i)$, $J_i(\mathbf{w}_i)$ is strongly convex. Then, according to Lemma~\ref{modified_solution}, Problems~\ref{problem_2} and \ref{problem_3} have optimal solution sets, and thus, we can use LR to train the classifiers.
\subsubsection{Network topology} We consider $n=10$ servers collaboratively train a prediction model. A connected random graph is used to describe the communication topology of the 10 servers. The used graph has $E=13$ communication links in total. Each server is responsible for collecting the data from a group of users, and thus there are 10 groups of users. In the experiments, we assume that each group has the same number of users, that is, $m_i=m_l, \forall i, l$. For example, we use $m=45,000$ instances sampled from the Adult dataset to train the classifier, and then each server collects data from $m_i = 4,500$ users.

\subsection{Experimental Results with Adult Dataset}
Based on the Adult dataset, we first verify the convergence property of the PDML framework. Fig.~\ref{consensus_obs} illustrates the maximum distances between the norms of arbitrary two classifiers found by different servers. We set the bound of $\boldsymbol{\eta}_i$ to 1. Other settings are the same as those with experiments under the synthetic dataset. For the sake of comparison, we also draw the variation curve (with circle markers) of the maximum distance when the privacy-preserving approach in Phase~2 is removed. We observe that both distances converge to 0, implying that the consensus constraint is eventually satisfied.

Fig. \ref{com_risks} shows the variation of empirical risks (the objective function in (\ref{original_objective})) as iterations proceed. Here, the green dashed line depicts the final empirical risk achieved by general ADMM with original data, which we call the reference empirical risk. There are also two curves showing varying empirical risks with privacy preservation. Comparing the two curves, we find that the ADMM with combined noise-adding scheme preserves the convergence property of the general ADMM algorithm. Due to the noise perturbations in Phase~2, the convergence time becomes longer. In addition, it can be seen that regardless of whether the privacy-preserving approach in Phase~2 is used, both ADMM schemes cannot achieve the same final empirical risks with that of the green line, which is consistent with the analysis in Section~\ref{com_dis}.
\begin{figure}
\begin{center}
\subfigure[Different $R$ (with $\rho=0.8$)]{\label{effects_M}
\includegraphics[scale=0.45]{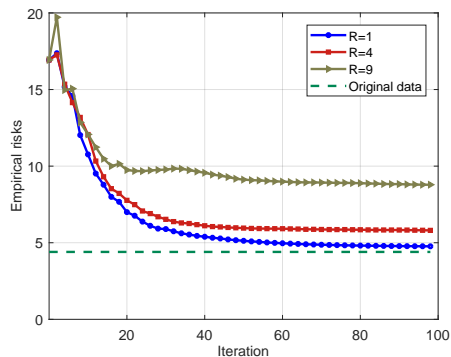}}
\subfigure[Different $\rho$ (with $R=1$)]{\label{effects_rho}
\includegraphics[scale=0.45]{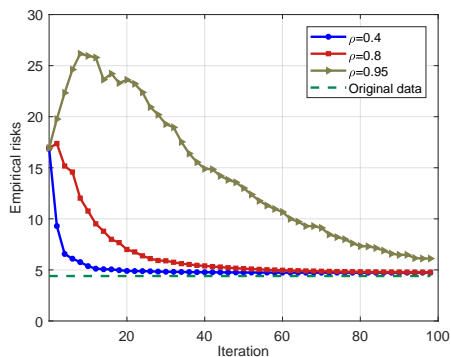}}
\subfigure[Different $\epsilon$]{\label{effects_epsilon}
\includegraphics[scale=0.45]{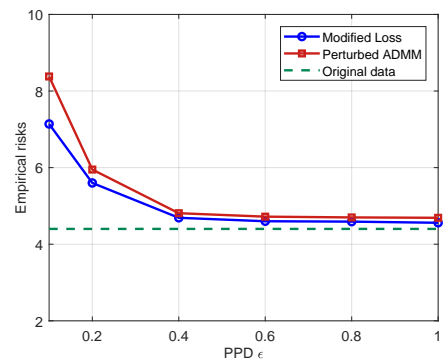}}
\caption{{\small The effects of key parameters.}}
\end{center}
\end{figure}
\begin{table*}[thb!]
\caption{{\small Classification accuracy with test data (\%)}} \label{table}
\renewcommand{\arraystretch}{1.5}
\centering
\begin{tabular}{*{8}{c}}
\hline
\hline \multirow{2}{*}{Dataset} & \multirowcell{2}{Without privacy protection} &\multicolumn{2}{c}{Modified loss} &\multicolumn{4}{c}{Perturbed ADMM} \\ \cmidrule(lr){3-4}\cmidrule(lr){5-8} \morecmidrules& &{$\epsilon=0.4$, $R=0$} &{$\epsilon=1$, $R=0$}  &{$\epsilon=0.4$, $R=1$} &{$\epsilon=0.4$, $R=9$} &{$\epsilon=1$, $R=1$} &{$\epsilon=1$, $R=9$} \\
\hline
{German}     &75.00 &71.00 &74.00 &69.67 &64.00 &74.33 &67.67  \\
{Image}        &75.56 &70.13 &72.84 &69.33 &63.10 &70.45 &65.50  \\
{Ringnorm}  &77.38 &73.44 &76.82 &73.74 &66.18 &75.77 &70.23  \\
{Banana}     &58.22 &54.33 &56.06 &54.28 &43.11 &55.89 &54.44  \\
{Splice}        &56.60 &46.84 &56.60 &54.94 &46.39 &55.83 &52.50  \\
{Twonorm}   &97.90 &96.59 &97.38 &96.51 &92.28 &97.41 &94.77  \\
{Waveform} &88.93 &84.60 &87.93 &84.07 &80.47 &87.67 &81.73  \\
\hline
\hline
\end{tabular}
\end{table*}

We then study the effects of the key parameters on the performance. In Fig.~\ref{effects_M}, we examine the impact of the noise bound $R$ when the decaying rate $\rho$ is fixed at $0.8$. It is observed that $R$ affects the final empirical risks of the trained classifiers. The larger the noise bound, the greater the gap between the achieved empirical risks and the reference value, which is reconciled with Corollary~\ref{corollary1}. In Fig.~\ref{effects_rho}, we inspect the effect of Gaussian noise decaying rate $\rho$ when $R$ is fixed at $1$. We find that the convergence time is affected by $\rho$. A larger $\rho$ implies that the communicated classifiers are still perturbed by noises with larger variance even after iterating over multiple steps. Thus, more iterations are needed to obtain the same final empirical risk with that of smaller $\rho$. Such a property can be derived from the bound in (\ref{eq_delta_Jt}).

Fig. \ref{effects_epsilon} illustrates the variation of final empirical risks when the PPD $\epsilon$ changes. The final empirical risks decrease with larger PPD (weaker privacy guarantee), which implies the tradeoff relation between the privacy protection and the performance. Further, the extra perturbations in Phase~2 lead to larger empirical risks for all the PPDs in the experiments. We also find that when $\epsilon$ is large ($\epsilon>0.6$), the achieved empirical risks are close to the reference value, and do not significantly change. Again, the result is consistent with the analysis of the bound in (\ref{Jp_w_hat}).
%
\subsection{Classification Accuracy Evaluation}
We use the test data of the seven datasets to evaluate the prediction performance of the trained classifiers, which is shown in Table~\ref{table}. The classification accuracy is defined as the ratio that the labels predicted by the trained classifier match the true labels of test data. For comparison, we present the classification accuracy achieved by general ADMM with the original data. For validation of classification accuracy under the PDML framework, we choose six different sets of parameter configurations to conduct the experiments. The specific configurations can be found in the second row of Table~\ref{table}. We find that lager $\epsilon$ and smaller $R$ will generate better accuracy. According to the theoretical results, the upper bounds for the differences $\Delta J_{\mathcal{P}}(\widetilde{\mathbf{w}}_\mathrm{opt})$ and $ \Delta J_{\mathcal{P}}(\widehat{\mathbf{w}}_\mathrm{opt})$ will decrease with lager $\epsilon$ and smaller $R$, implying better performance of the trained classifiers. Thus, the bound in Theorem \ref{delta_Jt} also provides a guideline to choose appropriate parameters to obtain a prediction model with satisfied classification accuracy.

It is impressive to observe that even under the strongest privacy setting ($\epsilon=0.4$, $R=9$), the proposed framework achieves comparable classification accuracy to the reference precision. We also notice that under the datasets \textit{Banana} and \textit{Splice}, PDML achieves  inferior accuracy in all settings. For a binary classification problem, it is meaningless to obtain a precision of around $50\%$. The reason for the poor accuracy may be that LR is not a suitable classification approach for these two datasets. Overall, the proposed PDML framework achieves competitive classification accuracy on the basis of providing strong privacy protection.

\section{Conclusion} \label{conclusion}
In this paper, we have provided a privacy-preserving ADMM-based distributed machine learning framework. By a local randomization approach, data contributors obtain self-controlled DP protection for the most sensitive labels and the privacy guarantee will not decrease as ADMM iterations proceed. Further, a combined noise-adding method has been designed for perturbing the ADMM algorithm, which simultaneously preserves privacy for users' feature vectors and strengthens protection for the labels. Lastly, the performance of the proposed PDML framework has been analyzed in theory and validated by extensive experiments.

For future investigations, we will study the joint privacy-preserving effects of the local randomization approach and the combined noise-adding method. Moreover, it is interesting while challenging to extend the PDML framework to the non-empirical risk minimization {problems}. When users allocate distinct sensitive levels to different attributes, we are interested in designing a new privacy-aware scheme providing heterogeneous privacy protections for different attributes.

\appendix
\subsection{Proof of Proposition~\ref{privacy_preservation}} \label{proof_p1}
Let $\mathbf{d}'=(\mathbf{x},y')$ be the reported data of a user with arbitrary data sample $\mathbf{d}=(\mathbf{x},y)$ drawn from $\mathcal{P}$. Then we have $\mathbf{d}'=M(\mathbf{d})$. Suppose that the user's data sample has label $y_1=1$, which is denoted by $\mathbf{d}_{1}=(\mathbf{x}, 1)$. By (\ref{randomization}) and (\ref{eq_p}), the probability that the user reports $y_1$ to the server is
\begin{equation}\nonumber
\small
\Pr[y'_1=1\;|\;y_1]=1-p=\frac{e^\epsilon}{1+e^\epsilon}.
\end{equation}
Similarly, if the user's original label is $y_2=-1$, i.e., $\mathbf{d}_{2}=(\mathbf{x}, -1)$, we have
\begin{equation}\nonumber
\small
\Pr[y'_2=1\;|\;y_2]=p=\frac{1}{1+e^\epsilon}.
\end{equation}
Then, we further have the relations as follows:
\begin{equation}\nonumber
\small
\Pr[y'_1=-1\;|\;y_1]=\frac{1}{1+e^\epsilon}, \Pr[y'_2=-1\;|\;y_2]=\frac{e^\epsilon}{1+e^\epsilon}.
\end{equation}
With a slight abuse of notation, we view label ``$-1$" as ``0" below. Note that under this case, the observation set $\mathcal{O}$ in Definition \ref{LDP_def} is the user's reported data $\mathbf{d}'$. Then, for any $\mathbf{d}'$ with feature vector $\mathbf{x}$ and arbitrary label $y'$, we have
\begin{equation}\nonumber
\small
\frac{\Pr[\mathbf{d}'\,|\,\mathbf{d}_1]}{\Pr[\mathbf{d}'\,|\,\mathbf{d}_2]}
\leq \max_{y'\in\{1, 0\}} \frac{(1-p)^{y'}\cdot p^{1-y'}}{p^{y'}\cdot(1-p)^{1-y'}} =\frac{1-p}{p}=e^{\epsilon},
\end{equation}
where we use the relation $p\in(0,\frac{1}{2})$. \qed
\subsection{Proof of Proposition~\ref{unbiased_loss}} \label{proof_l1}
(i) According to (\ref{randomization}), we have
\begin{alignat}{2}
\small
 \quad &\Pr[y'_{i, j}=y_{i, j}\;|\;y_{i, j}] =1-p=\frac{e^\epsilon}{1+e^\epsilon}, \nonumber\\
 \quad &\Pr[y'_{i, j}=-y_{i, j}\;|\;y_{i, j}] =p=\frac{1}{1+e^\epsilon},\nonumber
\end{alignat}
where we have used Proposition~\ref{privacy_preservation}. Then, it follows that
\begin{equation}\label{E_l_hat}
\small
\begin{split}
& \mathbb{E}_{y'_{i, j}}\left[\hat{\ell}(y'_{i, j}, \mathbf{w}_i^{\mathrm{T}}\mathbf{x}_{i, j}, \epsilon)\right] =\Pr[y'_{i, j}=y_{i, j}\;|\;y_{i, j}]\hat{\ell}(y_{i, j}, \mathbf{w}_i^{\mathrm{T}}\mathbf{x}_{i, j}, \epsilon)\\
& \qquad \qquad \qquad \qquad \quad +\Pr[y'_{i, j}=y_{i, j}\;|\,-y_{i, j}]\hat{\ell}(-y_{i, j}, \mathbf{w}_i^{\mathrm{T}}\mathbf{x}_{i, j}, \epsilon) \\
& =\frac{1}{e^\epsilon+1}\left[e^\epsilon\hat{\ell}(y_{i, j}, \mathbf{w}_i^{\mathrm{T}}\mathbf{x}_{i, j}, \epsilon)+\hat{\ell}(-y_{i, j}, \mathbf{w}_i^{\mathrm{T}}\mathbf{x}_{i, j}, \epsilon)\right].
\end{split}
\end{equation}
By (\ref{modified_loss}), we obtain
\begin{equation}\label{epsilon_l_hat}
\small
\begin{split}
& e^\epsilon\hat{\ell}(y_{i, j}, \mathbf{w}_i^{\mathrm{T}}\mathbf{x}_{i, j}, \epsilon)+\hat{\ell}(-y_{i, j}, \mathbf{w}_i^{\mathrm{T}}\mathbf{x}_{i, j}, \epsilon) \\
& = \frac{e^\epsilon}{e^\epsilon-1}\left[e^\epsilon\ell(y_{i, j}, \mathbf{w}_i^{\mathrm{T}}\mathbf{x}_{i, j})-\ell(-y_{i, j}, \mathbf{w}_i^{\mathrm{T}}\mathbf{x}_{i, j})\right] \\
& \quad +\frac{1}{e^\epsilon-1}\left[e^\epsilon\ell(-y_{i, j}, \mathbf{w}_i^{\mathrm{T}}\mathbf{x}_{i, j})-\ell(y_{i, j}, \mathbf{w}_i^{\mathrm{T}}\mathbf{x}_{i, j})\right] \\
& = (e^\epsilon+1)\ell(y_{i, j}, \mathbf{w}_i^{\mathrm{T}}\mathbf{x}_{i, j}).
\end{split}
\end{equation}
Substituting (\ref{epsilon_l_hat}) into (\ref{E_l_hat}), we arrive at
\begin{equation}\nonumber
\small
\mathbb{E}_{y'_{i, j}}\left[\hat{\ell}(y'_{i, j}, \mathbf{w}_i^{\mathrm{T}}\mathbf{x}_{i, j}, \epsilon)\right] = \ell(y_{i, j}, \mathbf{w}_i^{\mathrm{T}}\mathbf{x}_{i, j}).
\end{equation}

(ii) The derivative of $\hat{\ell}(y'_{i, j}, \mathbf{w}_i^{\mathrm{T}}\mathbf{x}_{i, j}, \epsilon)$ with respect to $\mathbf{w}_i$ is given by
\begin{equation}\nonumber
\small
\begin{split}
& \frac{\partial \hat{\ell}(y'_{i, j}, \mathbf{w}_i^{\mathrm{T}}\mathbf{x}_{i, j}, \epsilon)}{\partial \mathbf{w}_i} \\
& =\frac{e^\epsilon}{e^\epsilon-1}\frac{\partial \ell(y_{i, j}, \mathbf{w}_i^{\mathrm{T}}\mathbf{x}_{i, j})}{\partial \mathbf{w}_i} -\frac{1}{e^\epsilon-1}\frac{\partial \ell(-y_{i, j}, \mathbf{w}_i^{\mathrm{T}}\mathbf{x}_{i, j})}{\partial \mathbf{w}_i}.
\end{split}
\end{equation}
Then, we have
\begin{equation}\nonumber
\small
\begin{split}
&  \left|\frac{\partial \hat{\ell}(y'_{i, j}, \mathbf{w}_i^{\mathrm{T}}\mathbf{x}_{i, j}, \epsilon)}{\partial \mathbf{w}_i}\right| \\
& \leq \frac{e^\epsilon}{e^\epsilon-1}\left|\frac{\partial \ell(y_{i, j}, \mathbf{w}_i^{\mathrm{T}}\mathbf{x}_{i, j})}{\partial \mathbf{w}_i}\right|  +\frac{1}{e^\epsilon-1}\left|\frac{\partial \ell(-y_{i, j}, \mathbf{w}_i^{\mathrm{T}}\mathbf{x}_{i, j})}{\partial \mathbf{w}_i}\right| \\
& \leq \frac{e^{\epsilon}+1}{e^{\epsilon}-1}c_2.
\end{split}
\end{equation}
This bound gives the Lipschitz constant of $\hat{\ell}(y'_{i, j}, \mathbf{w}_i^{\mathrm{T}}\mathbf{x}_{i, j}, \epsilon)$. \qed
\subsection{Proof of Lemma \ref{objective_convex}} \label{proof_l2}
According to Assumption \ref{regularizer_assumption}, $N(\cdot)$ is doubly differentiable. By Taylor's Theorem, we have
\begin{equation}\nonumber
\small
\begin{split}
N(\mathbf{w}_1)& =N(\mathbf{w}_2)+\nabla N(\mathbf{w}_1)^\mathrm{T}(\mathbf{w}_2-\mathbf{w}_1) \\
& +\frac{1}{2}(\mathbf{w}_2-\mathbf{w}_1)^{\mathrm{T}}\nabla^2 N(\mathbf{w}_1)(\mathbf{w}_2-\mathbf{w}_1)+o\left(\|\mathbf{w}_2-\mathbf{w}_1\|_2^2\right),
\end{split}
\end{equation}
where $\nabla N(\cdot)$ and $\nabla^2 N(\cdot)$ denote the gradient and the second-order gradient, respectively. Due to (\ref{strongly_convex}), we derive
\begin{equation}\nonumber
\small
(\mathbf{w}_2-\mathbf{w}_1)^{\mathrm{T}}\nabla^2 N(\mathbf{w}_1)(\mathbf{w}_2-\mathbf{w}_1)\geq \kappa\|\mathbf{w}_2-\mathbf{w}_1\|_2^2,
\end{equation}
which implies $\nabla^2 N(\mathbf{w})\geq \kappa$. For $\forall i$, let $f(\mathbf{w}_i):=\widehat{J}_i(\mathbf{w}_i)-\frac{a\kappa}{2n}\|\mathbf{w}_i\|_2^2$, and then we have
\begin{equation}\nonumber
\small
\begin{split}
\frac{\partial^2 f(\mathbf{w}_i)}{\partial \mathbf{w}_i^2} & = \frac{1}{m_i}\sum_{j=1}^{m_i} \frac{\partial^2 \hat{\ell}(y'_{i, j}, \mathbf{w}_i^{\mathrm{T}}\mathbf{x}_{i, j}, \epsilon)}{\partial \mathbf{w}_i^2} + \frac{a}{n}\nabla^2 N(\mathbf{w}_i)-\frac{a\kappa}{n} \\
& \geq \frac{1}{m_i (e^{\epsilon}-1)}\sum_{j=1}^{m_i} \left[e^{\epsilon}\frac{\partial^2 \ell(y'_{i, j}, \cdot)}{\partial \mathbf{w}_i^2}-\frac{\partial^2 \ell(-y'_{i, j}, \cdot)}{\partial \mathbf{w}_i^2}\right] \\
& = \frac{1}{m_i}\sum_{j=1}^{m_i} \frac{\partial^2 \ell(y'_{i, j}, \mathbf{w}_i^{\mathrm{T}}\mathbf{x}_{i, j})}{\partial \mathbf{w}_i^2}\geq 0,
\end{split}
\end{equation}
where we have used Assumption \ref{loss_assumption}. This relation also implies that $f(\mathbf{w}_i)$ is convex. Then, we obtain $\forall \mathbf{w}_1, \mathbf{w}_2\in\mathcal{W}$, $f(\mathbf{w}_1)\geq f(\mathbf{w}_2)+\nabla f(\mathbf{w}_2)^{\mathrm{T}}(\mathbf{w}_1-\mathbf{w}_2)$. It follows that
\begin{equation}\nonumber
\small
\begin{split}
\widehat{J}_i(\mathbf{w}_1)-\frac{a\kappa}{2n}\|\mathbf{w}_1\|_2^2\geq & \widehat{J}_i(\mathbf{w}_2)+\frac{a\kappa}{2n}\|\mathbf{w}_2\|_2^2 \\
& +\nabla \widehat{J}_i(\mathbf{w}_2)^{\mathrm{T}}(\mathbf{w}_1-\mathbf{w}_2)-\frac{a\kappa}{n}\mathbf{w}_2^{\mathrm{T}} \mathbf{w}_1.
\end{split}
\end{equation}
Rearrange the above equation so that
\begin{equation}\nonumber
\small
\widehat{J}_i(\mathbf{w}_1)-\widehat{J}_i(\mathbf{w}_2)\geq \nabla \widehat{J}_i(\mathbf{w}_2)^{\mathrm{T}}(\mathbf{w}_1-\mathbf{w}_2) + \frac{a\kappa}{2n}\|\mathbf{w}_1-\mathbf{w}_2\|_2^2,
\end{equation}
which indicates $\widehat{J}_i(\mathbf{w}_i)$ is $\frac{a\kappa}{n}$-strongly convex. Since $\widehat{J}(\{\mathbf{w}_i\}_{i\in\mathcal{S}})=\sum_{i=1}^{n} \widehat{J}_i(\mathbf{w}_i)$, it follows that $\widehat{J}(\{\mathbf{w}_i\}_{i\in\mathcal{S}})$ is $a\kappa$-strongly convex. \qed
\subsection{Proof of Lemma \ref{J_tildle_convex}} \label{proof_l4}
The strongly convex property of $\widetilde{J}(\{\mathbf{w}_i\}_{i\in\mathcal{S}})$ can be proved directly from Lemma \ref{objective_convex}. For the Lipschitz continuous gradient, we consider the compact form of classifiers, as $\mathbf{w}=[\mathbf{w}_1^\mathrm{T} \cdots \mathbf{w}_n^\mathrm{T}]^\mathrm{T}$. We have $\widetilde{J}(\mathbf{w})=\widetilde{J}(\{\mathbf{w}_i\}_{i\in\mathcal{S}})$. The second derivative of $\widetilde{J}(\mathbf{w})$ with respect to $\mathbf{w}$ is given by
\begin{equation}\nonumber
\small
\begin{split}
\frac{\partial^2 \widetilde{J}(\mathbf{w})}{\partial \mathbf{w}^2}& = \sum_{i=1}^{n} \frac{\partial^2 \widetilde{J}_i(\mathbf{w}_i)}{\partial \mathbf{w}_i^2} \\
& = \sum_{i=1}^{n}\left[\frac{1}{m_i}\sum_{j=1}^{m_i} \frac{\partial^2 \hat{\ell}(y'_{i, j}, \cdot)}{\partial \mathbf{w}_i^2} + \frac{a}{n}\nabla^2 N(\mathbf{w}_i)\right].
\end{split}
\end{equation}
For $\frac{\partial^2 \hat{\ell}(y'_{i, j}, \mathbf{w}_i^{\mathrm{T}}\mathbf{x}_{i, j}, \epsilon)}{\partial \mathbf{w}_i^2}$, we have
\begin{equation}\nonumber
\small
\begin{split}
\left|\frac{\partial^2 \hat{\ell}(y'_{i, j}, \cdot)}{\partial \mathbf{w}_i^2}\right|& = \frac{1}{e^{\epsilon}-1} \left|e^{\epsilon}\frac{\partial^2 \ell(y'_{i, j}, \cdot)}{\partial \mathbf{w}_i^2}-\frac{\partial^2 \ell(-y'_{i, j}, \cdot)}{\partial \mathbf{w}_i^2}\right| \\
& = \left|\frac{\partial^2 \ell(y'_{i, j}, \cdot)}{\partial \mathbf{w}_i^2}\right|\leq c_3.
\end{split}
\end{equation}
Due to $\|\nabla^2 N(\cdot)\|_2\leq \varrho$, we derive
\begin{equation}\nonumber
\small
\left|\frac{\partial^2 \widetilde{J}(\mathbf{w})}{\partial \mathbf{w}^2} \right|\leq \sum_{i=1}^{n} \left[c_3 + \frac{a}{n} \varrho\right] = nc_3 + a\varrho.
\end{equation}
This also gives the Lipschitz continuous gradient of $\widetilde{J}(\mathbf{w})$. \qed
%

%
%

{\small 
\bibliographystyle{IEEEtran}        
\bibliography{root}   }        
\end{document}